\documentclass{article}
\usepackage[preprint]{icml2026}

\usepackage[utf8]{inputenc}
\usepackage{amsmath,amssymb,amsthm,amsfonts}
\usepackage{algorithm}
\usepackage{algorithmic}
\usepackage{graphicx}
\usepackage{wrapfig}
\usepackage{subcaption}
\usepackage{booktabs}
\usepackage{float}
\usepackage{placeins}
\usepackage{natbib}
\usepackage{enumitem}
\usepackage{mathtools}
\usepackage{tikz}
\usepackage{pgfplots}
\usepackage{fontspec}
\usepackage[english,provide=*]{babel}
\usepackage{changepage}
\usepackage{pgfplots}
\usepgfplotslibrary{groupplots}
\usepackage{pgfplotstable}
\usepackage{bm}

\usepackage{hyperref}
\usepackage{url}
\usepackage{natbib}

\pgfplotsset{compat=1.18}

\newtheorem{theorem}{Theorem}
\newtheorem{proposition}{Proposition}
\newtheorem{lemma}{Lemma}
\newtheorem{remark}{Remark}

\babelprovide[import,onchar=ids fonts]{english}
\babelprovide[onchar=ids fonts]{tifinagh}
\babelfont[tifinagh]{rm}[Path=fonts/]{ebrima.ttf}

\newfontfamily\tifinaghfont[Path=fonts/]{NotoSansTifinagh-Regular.ttf}
\DeclareRobustCommand{\E}{\text{\normalfont\tifinaghfont ⵟ}}

\newcommand{\R}{\mathbb{R}}
\newcommand{\Sph}{\mathbb{S}}

\begin{document}

\twocolumn[
\icmltitle{SLAY:\\
Geometry-Aware Spherical Linearized Attention with Yat-Kernel}
\icmltitlerunning{SLAY: Geometry-Aware Spherical Linearized Attention with Yat-Kernel}

\icmlsetsymbol{equalcontrib}{*}
\icmlsetsymbol{advisor}{†}

\begin{icmlauthorlist}
\icmlauthor{Jose Miguel Luna}{columb,azetta,equalcontrib}
\icmlauthor{Taha Bouhsine}{azetta,equalcontrib}
\icmlauthor{Krzysztof Choromanski}{columb,deepmind,advisor}
\end{icmlauthorlist}

\icmlaffiliation{columb}{Columbia University}
\icmlaffiliation{azetta}{Azetta.ai}
\icmlaffiliation{deepmind}{Google DeepMind}
\icmlcorrespondingauthor{Jose Miguel Luna}{jml2404@columbia.edu}

\icmlkeywords{Linear Attention, Neural Matter Networks, Random Features, Long Context}

\vskip 0.3in
]

\printAffiliationsAndNotice{\icmlEqualContribution $^\dagger$Senior Advisor.}

\begin{abstract}
We propose a new class of linear‑time attention mechanisms based on a relaxed and computationally efficient formulation of the recently introduced \emph{$\mathbb{E}$‑Product}, often referred to as the Yat‑kernel \citep{nomoredelulu}. The resulting interactions are geometry‑aware and inspired by inverse‑square interactions in physics. 
Our method, \emph{Spherical Linearized Attention with Yat Kernels} \textbf{(\emph{SLAY})}, constrains queries and keys to the unit sphere so that attention depends only on angular alignment. Using Bernstein’s theorem, we express the spherical Yat‑kernel as a nonnegative mixture of polynomial–exponential product kernels and derive a strictly positive random‑feature approximation enabling linear‑time $O(L)$ attention. We establish positive definiteness and boundedness on the sphere and show that the estimator yields well‑defined, nonnegative attention scores.
Empirically, SLAY achieves performance that is nearly indistinguishable from standard softmax attention while retaining linear time and memory scaling, and consistently outperforms prior linear‑time attention mechanisms such as Performers and Cosformers. To the best of our knowledge, SLAY represents the closest linear‑time approximation to softmax attention reported to date, enabling scalable Transformers without the typical performance trade‑offs of attention linearization.
\end{abstract}

\section{Introduction \& Related Work}
\label{sec:intro}

Transformer models \citep{transformers-base, vit, bert, brown, longt5, bigbird} owe much of their success to the \textit{attention mechanism}, which enables dynamic, content-dependent interactions between tokens. In standard Transformers, attention is implemented via softmax operation applied to pairwise query--key similarities. While expressive, it requires constructing explicit $L \times L$ attention matrices for $L$-length input sequences, resulting in quadratic in $L$ time and space complexity. This cost rapidly becomes prohibitive as context lengths grow, fundamentally limiting long-context modeling \citep{yaofu}.

To overcome this bottleneck several so-called \textit{efficient-attention} mechanisms, characterized by time complexity sub-quadratic (often linear) in $L$ were proposed.
They leverage a large spectrum of techniques ranging from clustering/hashing \citep{reformer, routing, lsh-transformer, clustr} to approaches casting attention matrix as a kernel matrix and applying techniques based on random features (RFs) \citep{performers, performer2, chefs, dense-exponential, luo, fft-t, arora}. Those mechanisms critically rely on \textbf{positive} RFs, since regular trigonometric mechanisms, that can be derived from the seminal papers on linearizing shift-invariant kernels \citep{random-features, sinks} lead to unstable training.

Despite these advances, softmax attention remains tied to a specific similarity: the exponential of an inner product. This choice conflates alignment and magnitude, and its unbounded growth requires careful normalization and stabilization. These limitations motivate alternative attention kernels that are geometrically grounded, self-regularizing, and compatible with efficient computation, as explored in activation-free architectures \citep{nomoredelulu}.

\textit{Neural Matter Networks (NMNs)} introduced the \emph{\E-Product}, also referred to as the Yat-kernel \citep{nomoredelulu}, a kernel operator inspired by inverse-square interactions in physics:
\begin{equation}
\E(\mathbf{q},\mathbf{k}) \overset{\mathrm{def}}{=} \frac{(\mathbf{q}^\top \mathbf{k})^2}{\|\mathbf{q}-\mathbf{k}\|^2 + \epsilon}
\label{eq:eproduct}
\end{equation}
where $\epsilon > 0$ ensures numerical stability. Unlike standard dot-product similarity, the \E-Product explicitly couples two geometric quantities: \emph{alignment} and \emph{proximity}. The squared inner product in the numerator yields an even (sign-symmetric) alignment score, while the inverse-distance denominator penalizes interactions between distant vectors. This ratio yields a self-regularizing response that can suppress irrelevant interactions without requiring explicit activation functions or normalization layers.

From a theoretical perspective, the \E-Product can be viewed as a kernel-like similarity operator. The NMNs, constructed as linear combinations of \E-Product units, are universal approximators on compact domains, despite being entirely activation-free \citep{nomoredelulu}. In the context of attention, the \E-Product offers a geometry-aware alternative to softmax similarity. It favors tokens that are both aligned and close in representation space. However, the same geometric coupling introduces a computational obstacle: the denominator $\|\mathbf{q}-\mathbf{k}\|_{2}^2 = \|\mathbf{q}\|_{2}^2 + \|\mathbf{k}\|_{2}^2 - 2\mathbf{q}^\top \mathbf{k}$ entangles query and key terms and prevents the factorization required for efficient linear attention. As a result, naive \E-Product attention still requires explicit pairwise interactions and reverts to quadratic complexity. This mirrors the general limitation for non-factorizable kernels that motivates linearization techniques such as those used in Performers and related models \citep{performers, hedgehog, insuhan}.

This paper addresses this limitation for \E-Attention. We show that by constraining queries and keys to lie on the unit sphere and reformulating the resulting kernel using Bernstein’s Theorem, the \E-Product admits a nonnegative mixture representation, involving polynomial--exponential product kernels, that can be approximated by strictly positive random features. This yields a linear-time attention mechanism that preserves the core geometric and self-regulating properties of the \E-Product while enabling scalable long-context Transformers. We refer to our proposed mechanism as: \emph{Spherical Linearized Attention with YAT-Kernels}, or \emph{SLAY}. Empirically, SLAY matches full spherical YAT attention and compares favorably to linearized baselines on regular Transformer tasks, while retaining $O(L)$-scaling. To summarize, SLAY:
\vspace{-2mm}
\begin{itemize}[leftmargin=1.4em]
\item Enforces unit-norm constraints on queries and keys, decoupling alignment and distance.
\vspace{-1mm}
\item Linearizes the spherical \E-product via an integral representation based on Bernstein’s Theorem.
\vspace{-1mm}
\item Approximates the resulting kernel using strictly positive Tensor Product Random Features.
\vspace{-1mm}
\item Achieves linear-time $O(L)$ attention while preserving key theoretical properties of NMNs.
\end{itemize}
\vspace{-2mm}
This paper is organized as follows: (1) in Section \ref{sec:method}, we present SLAY and provide detailed corresponding theoretical insight, (2) in Section \ref{sec:experiments} we test SLAY on various tasks, including language and vision, by comparing quality-wise with the brute-force mechanisms, as well as other efficient attention methods. We also present speed tests. We conclude in Section \ref{sec:conclusion} and provide additional results in the Appendix.

\section{SLAY Mechanism}
\label{sec:method}

\subsection{Preliminaries}
\label{sec:preliminaries}
Denote by $\mathbf{Q} \in \mathbb{R}^{L \times d_{QK}}, \mathbf{K} \in \mathbb{R}^{L \times d_{QK}}$ the query- and key-matrix respectively, with rows denoted as follows: $\mathbf{q}_{1},...,\mathbf{q}_{L} \in \mathbb{R}^{d_{QK}}$ and $\mathbf{k}_{1},...,\mathbf{k}_{L} \in \mathbf{R}^{d_{QK}}$. The attention matrix $\mathbf{A}_{\E} \in \mathbb{R}^{L \times L}$ considered in this paper is of the form: $\mathbf{A}_{\E}=[\E(\mathbf{\widehat{q}}_{i},\mathbf{\widehat{k}}_{j})]_{i=1,...,L}^{j=1,...,L}$, where $\widehat{\mathbf{x}}$ denotes the $L_{2}$-normalized version of $\mathbf{x}$. Our goal is to compute the attention action leveraging $\mathbf{A}_{\E}$, that we refer to as \E-attention, with the spherical constraints, without forming the $L\times L$ matrix $\mathbf{A}_{\E}$ of pairwise interactions. We proceed in several steps, summarized below:
\begin{itemize}
\item linearization of the non-factorizable terms induced by denominators in the YAT-kernel formula via a Laplace integral (Bernstein's Theorem),
\item discretization of the resulting nonnegative mixture using Gauss--Laguerre quadrature,
\item approximating resulting kernels with positive random features from \citep{performers},
\item applying standard matrix multiplication re-ordering for the calculations involving obtained low-rank factorized attention matrix.
\end{itemize}


\subsection{Spherical Constraint}

We assume $d \ge 2$ throughout, where spherical isotropy theory applies. We normalize inputs as follows:
\begin{equation}
\widehat{\mathbf{q}} = \frac{\mathbf{q}}{\|\mathbf{q}\|_{2}},
\widehat{\mathbf{k}} = \frac{\mathbf{q}}{\|\mathbf{k}\|_{2}},
\|\widehat{\mathbf{q}}\|_{2}=\|\widehat{\mathbf{k}}\|_{2}=1
\end{equation}
Expanding the denominator of Eq.~\eqref{eq:eproduct} yields:
\begin{align}
\|\widehat{\mathbf{q}}-\widehat{\mathbf{k}}\|_{2}^2 + \epsilon
&= \|\widehat{\mathbf{q}}\|_{2}^2 + \|\widehat{\mathbf{k}}\|_{2}^2 - 2\widehat{\mathbf{q}}^\top \widehat{\mathbf{k}} + \epsilon \\
&= (2+\epsilon) - 2\widehat{\mathbf{q}}^\top \widehat{\mathbf{k}}.
\end{align}
Let $x=\widehat{\mathbf{q}}^\top \widehat{\mathbf{k}}\in[-1,1]$ and $C=2+\epsilon$. After input normalization, the spherical \E-product becomes:
\begin{equation}
\E_{\text{sph}}(\widehat{\mathbf{q}},\widehat{\mathbf{{k}}})
= \frac{x^2}{C-2x}.
\label{eq:spherical}
\end{equation}
Thus, the kernel depends only on angular alignment.

\paragraph{Geometric intuition.}
On the unit sphere $\Sph^{d-1}$, the squared \emph{chordal} distance is defined as follows:
\[
d_{\Sph^{d-1}}(\widehat{\mathbf{q}}, \widehat{\mathbf{k}})^2 = 2(1 - \widehat{\mathbf{q}}^\top \widehat{\mathbf{k}}).
\]
We conclude that the spherical \E-product can be written as a distance-regularized alignment score:
\begin{equation}
\E_{\mathrm{sph}}(\hat q, \hat k) = \frac{(\widehat{\mathbf{q}}^{\top}\widehat{\mathbf{k}})^{2}}{d_{\Sph^{d-1}}(\widehat{\mathbf{q}}, \widehat{\mathbf{k}})^2 + \epsilon}.
\label{eq:geometric}
\end{equation}
Additional discussion (including invariances and the connection to isotropic spherical kernels \citep{schoenberg1942}) is deferred to Appendix~\ref{app:geom}.

\subsection{Integral Linearization}
With spherical constraints in place, our next step is to linearize the factor $\frac{1}{C-2x}$ in the Eq. \ref{eq:spherical} for the spherical \E-product. We will use Bernstein's Theorem for that.

The function $g(y) = 1/y$ is completely monotone on $(0,\infty)$, which by Bernstein's Theorem implies the Laplace representation $1/y = \int_0^\infty e^{-sy}\,ds$ \citep{widder1941laplace,schilling2012}. To apply this identity to our kernel, we substitute $y = C - 2x$ and verify that $y > 0$ for all $x \in [-1,1]$. Since $x \le 1$ and $C=2+\epsilon$, we have $y=C-2x \ge \epsilon > 0$, hence Bernstein's representation applies (see Lemma~\ref{lem:complete-monotone} in Appendix~\ref{app:background}).

We conclude that by invoking the Laplace representation, the spherical \E-product can be re-written as:
\begin{align}
\E_{\text{sph}}(\widehat{\mathbf{q}},\widehat{\mathbf{k}})
&= x^2 \int_0^\infty e^{-s(C-2x)}ds \\
&= \int_0^\infty e^{-sC}\,\Big[x^2 e^{2s x}\Big]\,ds.
\label{eq:integral}
\end{align}
This expresses the spherical \E-product as a positively weighted mixture of product kernels:
namely, a degree-2 polynomial factor $(\widehat{\mathbf{q}}^\top \widehat{\mathbf{k}})^2$ multiplied by an exponential dot-product kernel $e^{2s\,\widehat{\mathbf{q}}^\top \widehat{\mathbf{k}}}$.
Importantly, the factor $x^2$ cannot be absorbed into a \emph{nonnegative} Laplace weight over plain exponentials without introducing signed correction terms (see Appendix~\ref{app:features}).

\subsection{Quadratures and Positive Features}

\subsubsection{Quadrature (Gauss--Laguerre)}

We approximate the integral in Eq.~\eqref{eq:integral} using $R$-point Gauss--Laguerre quadrature. We apply Gauss--Laguerre quadrature after the change of variables $t = Cs$, so that $\int_0^\infty e^{-Cs} h(s)\,ds = \frac{1}{C}\int_0^\infty e^{-t} h(t/C)\,dt$:
\[
\int_0^\infty e^{-sC} h(s)\,ds \;\approx\; \sum_{r=1}^R w_r\, h(s_r),
\]
where $\{t_r,\alpha_r\}_{r=1}^R$ are the standard Gauss--Laguerre nodes and weights for $\int_0^\infty e^{-t}f(t)\,dt$ and
\[
s_r=\frac{t_r}{C},\qquad w_r=\frac{\alpha_r}{C}.
\]
Thus, the $w_r$ already incorporate the $1/C$ factor induced by $t=Cs$. Since $|x| \le 1$, the integrand $h(s) = x^2 e^{2sx}$ is entire and uniformly bounded by $e^{2s}$, ensuring uniform exponential convergence over $x \in [-1,1]$. Such bounds follow from classical results on Gauss--Laguerre quadrature for entire functions of exponential type (see Theorem 3.6.24 in \citep{davis1984,gautschi2004}); for a shapes/implementation summary, see Appendix~\ref{app:impl} and Appendix~\ref{app:quadrature-details}.

\subsubsection{Polynomial and Exponential RFs}
\label{sec:poly-exp-rfs}

\paragraph{Tensor sketches \& beyond for polynomial kernels:}
For the exact polynomial kernel $(\widehat{\mathbf{q}}^\top \widehat{\mathbf{k}})^2$, the explicit feature map $\phi_{\text{poly}}(\mathbf{u}) = \mathrm{vec}(\mathbf{u} \mathbf{u}^\top) \in \R^{d^2}$ yields exact reconstruction:
\[
\langle \phi_{\text{poly}}(\widehat{\mathbf{q}}), \phi_{\text{poly}}(\widehat{\mathbf{k}}) \rangle = (\widehat{\mathbf{q}}^\top \widehat{\mathbf{k}})^2.
\]
Note that vectors produced by $\phi_{\mathrm{poly}}$ maps are $d^{2}$-dimensional. In practice, we reduce dimensionality via standard low-dimensional reduction techniques. We consider TensorSketch \citep{pham2013} as well as three common approximations to the degree-2 polynomial kernel $k_{\text{poly}}(x,y)=(x^\top y)^2$, described formally in Appendix~\ref{app:prelim-poly}: (i) Random Maclaurin (RM) features \citep{karkarnick2012}, (ii) Nystrom features using anchor points \citep{williams2001nystrom}, and (iii) anchor features using squared inner products to fixed anchors.

\paragraph{Anchor features \citep{scholkopf2002learning}.}
Let anchors $\{\mathbf{a}_i\}_{i=1}^P\subset\R^d$ be fixed reference vectors. Anchor features define mapping $\phi$ as follows:
\[
\phi_{\mathrm{anc}}(\mathbf{x}) = \frac{1}{\sqrt{P}}\bigl[(\mathbf{x}^\top \mathbf{a}_i)^2\bigr]_{i=1}^P,
\]
yielding a simple low-rank approximation whose induced inner products are nonnegative. Unlike Nystrom approximations, anchor features do not require inversion/whitening of the anchor Gram matrix and therefore preserve non-negativity of approximate kernel evaluations. Furthermore, they are empirically stable at small feature budgets, and computationally simple ($O(dP)$ per token). Thus, unless stated otherwise, we use \emph{anchor features} as the default polynomial approximation method in SLAY. The multi-scale sweep in Table~\ref{tab:poly-sweep} (Appendix~\ref{app:poly-ablation}) supports this choice.

Table~\ref{tab:poly-approximations} summarizes the trade-offs between various techniques for approximating squared dot-product kernels, highlighting positivity preservation as a key distinction, determining the robustness of the technique. 

\begin{table*}[t]
\centering
\small
\caption{Polynomial kernel approximation options for the kernel $(\mathbf{x}^\top \mathbf{y})^2$. Here $D_p$ denotes the polynomial-feature dimension, and $P$ denotes the number of anchors (when applicable). \emph{Feature cost} is the asymptotic cost of computing the polynomial features for one vector and excludes quadrature/PRF computation, tensor-product fusion/sketching, and the linear-attention contractions; these additional costs drive end-to-end latency in Section~\ref{sec:experiments}.}
\label{tab:poly-approximations}
\begin{tabular*}{\textwidth}{@{\extracolsep{\fill}}lcccc}
\toprule
Method & Dim. & Feature cost & Unbiased? & $\langle\phi(x),\phi(y)\rangle\ge 0$? \\
\midrule
Exact $\mathrm{vec}(uu^\top)$ & $d^2$ & $O(d^2)$ & Yes & Yes \\
TensorSketch \citep{pham2013} & $D_p$ & $\approx O(d + D_p\log D_p)$ & Approx. & No (not guaranteed) \\
Random Maclaurin \citep{karkarnick2012} & $D_p$ & $O(d\,D_p)$ & Yes & No (not guaranteed) \\
Nystrom \citep{williams2001nystrom} & $P$ & $O(d\,P)$ & Approx. & No (not guaranteed) \\
Anchor features (low-rank) \citep{scholkopf2002learning} & $P$ & $O(d\,P)$ & No & Yes \\
\bottomrule
\end{tabular*}
\end{table*}

For the theoretical non-negativity and denominator-positivity guarantees stated later, we require a polynomial component whose induced score estimates are nonnegative (e.g., the exact map, or anchor features). Signed polynomial approximations (TensorSketch, Random Maclaurin) and Nystrom features can yield negative approximate inner products (see Appendix~\ref{app:stability-experiments}) and are therefore treated as accuracy/efficiency baselines rather than positivity-guaranteeing estimators.


\paragraph{Positive Random Features (PRFs).}
To handle the exponential term $e^{2s\,\widehat{\mathbf{q}}^\top \widehat{\mathbf{k}}}$ from Eq. \ref{eq:integral}, we use PRFs for the exponential kernel, proposed in \citep{performers}, applied to the \emph{original normalized vectors} (we refer to them simply as PRFs; note that this is a different mechanisms than features used to approximate squared dot-product kernels, discussed above):
\begin{equation}
\phi_{\text{PRF}}(\mathbf{u}; s)
=
\frac{1}{\sqrt{D}}
\left[
\exp\!\left(\sqrt{2s}\,\omega_i^\top \mathbf{u} - s\right)
\right]_{i=1}^D,
\end{equation}
where $\omega_i \sim \mathcal{N}(0, \mathbf{I}_d)$ are drawn independently.
This construction satisfies:
\[
\mathbb{E}\!\left[
\phi^{\top}_{\text{PRF}}(\hat q; s)
\phi_{\text{PRF}}(\hat k; s) 
\right]
=
e^{2s\,\widehat{\mathbf{q}}^\top \widehat{\mathbf{k}}}
\]
for unit-norm inputs (a standard unbiasedness result for positive random features; see \citep{performers} and Prop.~\ref{prop:prf-unbiased} in Appendix~\ref{app:background}).

\subsubsection{Fusing RFs for Polynomial \& Exponential Kernels}
Since polynomial feature maps and PRFs are applied to vectors (not scalars), we obtain for each scale $r$:
\begin{equation}
\widetilde{\Psi}_r(\mathbf{u})
=
\sqrt{w_r}\,\mathcal{S}\Bigl(\phi_{\text{poly}}(\mathbf{u})\otimes \phi_{\text{PRF}}(\mathbf{u}; s_r)\Bigr),
\end{equation}
where $\mathcal{S}:\R^{D_pD_r}\to\R^{D_t}$ is a (randomized) sketching operator that approximates the tensor-product feature map without explicitly materializing the $D_pD_r$-dimensional Kronecker vector.
We then define $\widetilde{\Psi}(\mathbf{u})$ as the concatenation over $r=1,\dots,R$ (where $R$ stands for the number of quadrature nodes). Conceptually, the target kernel at each quadrature node is the product kernel $(\widehat{\mathbf{q}}^\top \widehat{\mathbf{k}})^2\,e^{2s_r\widehat{\mathbf{q}}^\top \widehat{\mathbf{k}}}$, whose RKHS is the tensor product $\mathcal{H}_{\text{poly}}\otimes\mathcal{H}_{\exp,s_r}$. The sketching operator $\mathcal{S}$ provides a computationally efficient approximation of this tensor-product feature map.

\begin{remark}[Feature Map Target]
\label{rem:feature-target}
$\widetilde{\Psi}$ targets the \emph{integrand} $k_s(x)=x^2 e^{2sx}$; consequently,
\vspace{-2mm}
\[
\mathbb{E}[\langle \widetilde{\Psi}(\hat q), \widetilde{\Psi}(\hat k) \rangle] \approx \sum_{r=1}^R w_r\,(\hat q^\top \hat k)^2 e^{2s_r\hat q^\top \hat k},
\]
which is a quadrature approximation to Eq.~\eqref{eq:integral}.
\end{remark}

\begin{remark}[Bias Decomposition]
\label{rem:bias}
$\langle \widetilde{\Psi}(\hat q), \widetilde{\Psi}(\hat k) \rangle$ is unbiased for the discretized (quadrature) kernel, but biased for the true kernel unless $R\to\infty$.
\end{remark}
\vspace{-3.5mm}
\subsection{Linear-Time Attention Computation}

Given query/key tensors $\mathbf{Q},\mathbf{K} \in \R^{L\times d_{QK}}$ and value tensor $\mathbf{V} \in \mathbb{R}^{L \times d_{V}}$, we compute normalized inputs, apply $\Psi$, and use the standard linear-attention reordering of the computations to calculate attention:
\vspace{-2.5mm}
\begin{equation}
\label{eq:lin-att}
\widehat{\mathbf{Y}} = \frac{\widetilde{\Psi}(\mathbf{Q})\big(\widetilde{\Psi}(\mathbf{K})^\top \mathbf{V}\big)}
{\widetilde{\Psi}(\mathbf{Q})\big(\widetilde{\Psi}(\mathbf{K})^\top \mathbf{1}\big)}.
\end{equation}
Note that this normalization is not a softmax; it corresponds to kernel normalization and preserves linear-time computation.
Here the division is applied row-wise (broadcast across the value dimension); in practice we add a small stabilizer $\delta>0$ to the denominator for numerical stability \citep{higham2002}. Concretely, if $\Psi(\mathbf{Q}),\Psi(\mathbf{K})\in\R^{L\times m}$ (feature dimension $m$) and $\mathbf{V}\in\R^{L\times d_{V}}$, then $\Psi(\mathbf{K})^\top \mathbf{V}\in\R^{m\times d_{V}}$, $\Psi(\mathbf{K})^\top\mathbf{1}\in\R^{m\times 1}$, and the denominator is an $L\times 1$ vector broadcast over $d_{V}$. 
Thus attention matrix $\mathbf{A}_{\E}$ is never explicitly materialized and quadratic in $L$ computations are avoided.
The pseudo-code summarizing the forward pass of the SLAY algorithm is presented in Algorithm 1 Box.

\begin{algorithm}[t]
\caption{Spherical \E-Attention Forward Pass}
\begin{algorithmic}[1]
\REQUIRE $\mathbf{Q},\mathbf{K}\in\R^{L\times d_{QK}}, \mathbf{V} \in \mathbb{R}^{L \times d_{V}}$
\STATE Normalize $\mathbf{Q},\mathbf{K}$ to have unit-norm rows.
\STATE Compute polynomial features $\phi_{\text{poly}}(\mathbf{Q}),\phi_{\text{poly}}(\mathbf{K})$ (e.g., anchor features by default).
\FOR{$r=1$ to $R$}
\STATE Compute PRF features $\phi_{\text{PRF}}(\cdot; s_r)$
\STATE Fuse features via sketched tensor product $\widetilde{\Psi}_r$
\ENDFOR
\STATE Concatenate features $\widetilde{\Psi}(\mathbf{Q}),\widetilde{\Psi}(\mathbf{K})$.
\STATE Compute numerator/denominator as in Eq. \ref{eq:lin-att}.
\RETURN $\widehat{\mathbf{Y}}$
\end{algorithmic}
\end{algorithm}
\paragraph{Complexity Analysis.}
Let $m$ denote the final number of features used in the attention linearization (after concatenating across $R$ quadrature nodes), which in our construction scales as $m=O(RD_t)$. The linear-attention contractions costs $O(L\,m\,d_{V})$ time and $O(L\,m)$ space (the $L\times L$ attention matrix is never formed).
Feature construction depends on the polynomial approximation: exact degree-2 features cost $O(L\,d^2)$, anchor features cost $O(L\,dP)$, Random Maclaurin costs $O(L\,dD_p)$, and TensorSketch costs approximately $O(L\,(d + D_p\log D_p))$ per layer.
The PRF features contribute $O(L\,R\,dD)$ time.
See Appendix~\ref{app:impl} for explicit tensor-product scaling (without sketching) and causal-vs.-non-causal implementation notes.
\vspace{-3mm}
\section{Experiments}
\label{sec:experiments}
We evaluate SLAY along several axes:
\vspace{-3mm}
\begin{itemize}
\item First, we analyze polynomial approximation options to identify optimal configurations for SLAY (see: Sec. \ref{subsec:poly-ablation} for detailed studies).
\vspace{-2mm}
\item Second, we benchmark computational costs by measuring latency, memory, and throughput scaling with sequence length (see: Sec. \ref{subsec:computational-costs}).
\vspace{-2mm}
\item  Third, we evaluate SLAY on \textbf{22} synthetic tasks spanning core capabilities, as well as higher-level behaviors (see: Sec. \ref{subsec:synthetic-experiments}). 
\vspace{-2mm}
\item Fourth, we conduct extreme classification tests to evaluate performance of SLAY versus other attention mechanisms in large-scale classification tasks (see: Sec. \ref{subsec:extreme-classification}). 
\vspace{-2mm}
\item Finally, we test the SLAY mechanism at scale by training and evaluating a series of full-scale Transformer models, varying only the attention mechanism used, leaving every other hyperparameter constant. (see: Sec. \ref{subsec:slayformer}).
\end{itemize}
\vspace{-3mm}
\subsection{Polynomial Factor Approximation}
\label{subsec:poly-ablation}
We isolate the polynomial factor $(\widehat{\mathbf{q}}^\top \widehat{\mathbf{k}})^2$ from the formula of the spherical \E-product (see: Sec. \ref{sec:poly-exp-rfs}) and compare several approximations (anchor, Nystrom, TensorSketch, Random Maclaurin). We report (i) kernel-normalized attention output error relative to exact spherical \E-attention and (ii) forward-pass latency, under matched feature budgets; protocol details and the multi-scale sweep are in Appendix~\ref{app:experiments} and Table~\ref{tab:poly-sweep} (Appendix~\ref{app:poly-ablation}).

For reference, we also include Laplace-only and a Hadamard-fusion variant. These are not polynomial approximations of $(\widehat{\mathbf{q}}^\top \widehat{\mathbf{k}})^2$: they change the estimator by removing or altering the polynomial factor. Under feature-budget matching, they might require substantially larger number of RFs to reach comparable overall feature counts, which can dominate runtime.
\begin{table}[htbp]
\centering
\small
\caption{\small{Kernel approximation quality and latency comparison. Rel.~$\ell_2$ is relative L2 error, Cos is cosine similarity, MSE is mean squared error, and Latency is forward-pass time in milliseconds. For large-scale snapshot, see the ``Large'' block in Table~\ref{tab:poly-sweep}).}}
\label{tab:kernel-combined}
\resizebox{1.05\columnwidth}{!}{%
\begin{tabular}{lcccc}
\toprule
Method & Rel.~$\ell_2\downarrow$ & Cos$\uparrow$ & MSE$\downarrow$ & Latency(ms)$\downarrow$ \\
\midrule
Exact (Spherical) & 0.789 & 0.732 & 1.02e-02 & 5.02 \\
\textbf{Anchor} & \textbf{0.527} & \textbf{0.850} & \textbf{4.55e-03} & \textbf{489.42} \\
Laplace-only & 0.544 & 0.839 & 4.84e-03 & 1905.80 \\
Hadamard (shared $\omega$) & 0.789 & 0.732 & 1.02e-02 & 1932.07 \\
Nystrom & 70.291 & -0.009 & 8.08e+01 & 569.64 \\
TensorSketch & 24823.685 & 0.002 & 1.01e+07 & 547.76 \\
Random Maclaurin & 15826.841 & 0.004 & 4.10e+06 & 551.43 \\
\bottomrule
\end{tabular}%
}
\end{table}

As shown in (Table~\ref{tab:kernel-combined}), anchor and Laplace-only achieve the lowest errors, but anchor is markedly faster (\,$\mathbf{489}$ms vs.~$\mathbf{1906}$ms\,) and thus becomes our default choice in the remaining experiments. Nystrom is substantially less accurate at these budgets. Signed polynomial approximations (TensorSketch and Random Maclaurin) can yield negative approximate inner products, leading to denominator cancellation and severe instability; we include them as efficiency baselines rather than positivity-guaranteeing estimators.

Finally, figure \ref{fig:decision-boundaries} demonstrates the approximation quality of the SLAY anchor kernel, where we observe that SLAY approximates the behavior of the YAT and Spherical YAT kernels remarkably well and at least as well if not better than alternative kernels and their linearized counterparts.

\begin{figure}[htbp]
\centering
\includegraphics[width=\columnwidth]{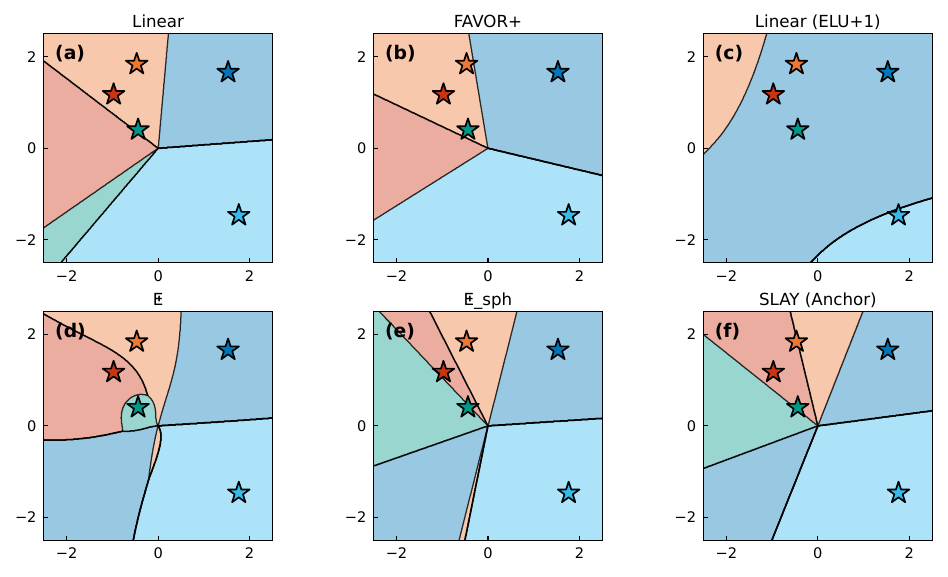}
\caption{Each panel shows how a kernel partitions the 2D feature space among 5 randomly placed neurons (stars). \textbf{(a)} Linear dot product softmax. \textbf{(b)} FAVOR+ (ReLU random features). \textbf{(c)} Linear with ELU+1. \textbf{(d)} Exact \E-kernel. \textbf{(e)} Spherical \E-kernel. \textbf{(f)} SLAY (Anchor)}
\label{fig:decision-boundaries}
\end{figure}

\subsection{Computational Costs}
\label{subsec:computational-costs}

We compare the computational efficiency of \textbf{SLAY} as a linearized estimator of spherical YAT attention, against alternative quadratic and linear attention mechanisms. We report latency, peak memory usage, and throughput as functions of sequence length $L$, focusing on regimes relevant to long-context modeling.

\paragraph{Benchmark setup.}
All attention mechanisms are benchmarked in isolation, using a causal attention kernel with identical architectural settings (embedding dimension $256$, $8$ heads, batch size $1$). Experiments are conducted on a single NVIDIA A100-SXM4 GPU (80\,GB). For each sequence length, we report mean latency over multiple timed runs after warm-up, peak GPU memory allocation, and effective throughput measured in tokens per second. Sequence lengths range from 128 tokens up to 131K tokens, or until out-of-memory (OOM) failure.

\begin{figure}[t!]
\centering
\caption{Attention mechanisms scaling behaviors. Several variants are compared: brute-force regular attention (\textrm{Standard}), YAT-attention (YAT), {linear attention} (\textrm{ELU+1}), Cosformer attention from \citep{zhen2022cosformer}, Performer's attention from \citep{performers} (FAVOR+),  and SLAY's attention.}
\label{fig:scaling}
\vspace{0.3em}

\begin{tikzpicture}

\begin{axis}[
    hide axis,
    xmin=0, xmax=1,
    ymin=0, ymax=1,
    width=\columnwidth*.8,
    height=0.38\columnwidth,
    scale only axis,
    legend columns=3,
    legend style={
        font=\scriptsize,
        at={(0.5,1.03)}, 
        anchor=south,
        draw=none,
        column sep=0.5em,
        /tikz/every even column/.append style={column sep=0.8em}
    },
]
\addlegendimage{black, thick, mark=*, mark size=1.5pt}
\addlegendentry{Standard}
\addlegendimage{black, dashed, thick, mark=square*, mark size=1.5pt}
\addlegendentry{YAT}
\addlegendimage{blue, mark=triangle*, mark size=1.5pt}
\addlegendentry{Linear(ELU+1)}
\addlegendimage{orange, mark=triangle*, mark size=1.5pt}
\addlegendentry{Cosformer}
\addlegendimage{purple, mark=x, mark size=2pt}
\addlegendentry{FAVOR+}
\addlegendimage{red, thick, mark=star, mark size=2pt}
\addlegendentry{SLAY}
\end{axis}

\begin{groupplot}[
    group style={
        group size=1 by 3,
        vertical sep=0.5cm,
    },
    width=\columnwidth*.8,
    height=0.38\columnwidth,
    xmode=log,
    log basis x=2,
    scale only axis,
    grid=both,
    grid style={line width=0.3pt, gray!30},
    major grid style={line width=0.4pt, gray!50},
    tick label style={font=\scriptsize},
    label style={font=\scriptsize},
    every axis plot/.append style={line width=0.6pt, mark size=1.2pt},
]

\nextgroupplot[
    ylabel={Latency (ms)},
    xticklabels={},
    ymin=0,
]

\addplot+[black, thick, mark=*] coordinates {
(128,0.41) (256,0.43) (512,0.45) (1024,0.67)
(2048,2.08) (4096,7.04) (8192,22.84) (16384,85.60)
};
\addplot+[black, dashed, thick, mark=square*] coordinates {
(128,0.57) (256,0.56) (512,0.63) (1024,0.90)
(2048,2.79) (4096,9.79) (8192,37.38) (16384,141.43)
};
\addplot+[blue, mark=triangle*] coordinates {
(128,0.58) (256,0.58) (512,0.72) (1024,1.18)
(2048,2.10) (4096,3.91) (8192,8.42) (16384,18.82)
(32768,36.68) (65536,73.27) (131072,147.63)
};
\addplot+[orange, mark=triangle*] coordinates {
(128,0.96) (256,0.98) (512,1.29) (1024,2.20)
(2048,4.01) (4096,7.52) (8192,14.84) (16384,35.32)
(32768,72.19) (65536,144.60) (131072,291.17)
};
\addplot+[purple, mark=x] coordinates {
(128,0.63) (256,0.64) (512,0.99) (1024,1.55)
(2048,2.69) (4096,4.99) (8192,9.73) (16384,19.38)
(32768,38.77) (65536,77.48) (131072,155.06)
};
\addplot+[red, thick, mark=star] coordinates {
(128,1.46) (256,1.45) (512,1.71) (1024,2.17)
(2048,3.26) (4096,5.47) (8192,9.78) (16384,19.02)
(32768,37.69) (65536,75.05) (131072,149.63)
};

\nextgroupplot[
    ylabel={Peak Memory (MB)},
    xticklabels={},
]

\addplot+[black, thick, mark=*] coordinates {
(128,11.9) (256,15.7) (512,29.4) (1024,81.1)
(2048,282.1) (4096,1074.1) (8192,4218.1) (16384,16746.1)
};
\addplot+[black, dashed, thick, mark=square*] coordinates {
(128,13.4) (256,21.7) (512,53.4) (1024,177.2)
(2048,666.3) (4096,2610.4) (8192,10362.6) (16384,41323.1)
};
\addplot+[blue, mark=triangle*] coordinates {
(128,22.9) (256,35.6) (512,61.1) (1024,112.1)
(2048,214.1) (4096,418.1) (8192,826.1) (16384,1642.1)
(32768,3274.1) (65536,6538.1) (131072,13066.1)
};
\addplot+[orange, mark=triangle*] coordinates {
(128,31.5) (256,52.9) (512,95.6) (1024,181.1)
(2048,352.2) (4096,694.2) (8192,1378.2) (16384,2746.3)
(32768,5482.5) (65536,10954.9) (131072,21899.6)
};
\addplot+[purple, mark=x] coordinates {
(128,27.8) (256,45.4) (512,80.7) (1024,151.2)
(2048,292.3) (4096,574.3) (8192,1138.4) (16384,2266.7)
(32768,4523.2) (65536,9036.2) (131072,18062.2)
};
\addplot+[red, thick, mark=star] coordinates {
(128,36.2) (256,62.2) (512,114.2) (1024,151.7)
(2048,160.7) (4096,178.7) (8192,214.7) (16384,314.2)
(32768,618.2) (65536,1226.2) (131072,2442.2)
};

\nextgroupplot[
    ylabel={Throughput (tok/s)},
    xlabel={Sequence length},
]

\addplot+[black, thick, mark=*] coordinates {
(128,309605) (256,601515) (512,1140190) (1024,1535463)
(2048,984540) (4096,581929) (8192,358680) (16384,191393)
};
\addplot+[black, dashed, thick, mark=square*] coordinates {
(128,223026) (256,457217) (512,812948) (1024,1132992)
(2048,733302) (4096,418302) (8192,219174) (16384,115845)
};
\addplot+[blue, mark=triangle*] coordinates {
(128,220935) (256,441509) (512,715210) (1024,869111)
(2048,973777) (4096,1046247) (8192,973413) (16384,870581)
(32768,893397) (65536,894465) (131072,887852)
};
\addplot+[orange, mark=triangle*] coordinates {
(128,132910) (256,260132) (512,396415) (1024,466298)
(2048,511045) (4096,544607) (8192,552194) (16384,463905)
(32768,453914) (65536,453236) (131072,450163)
};
\addplot+[purple, mark=x] coordinates {
(128,202368) (256,400720) (512,514787) (1024,660425)
(2048,762536) (4096,820486) (8192,841800) (16384,845487)
(32768,845161) (65536,845833) (131072,845286)
};
\addplot+[red, thick, mark=star] coordinates {
(128,87397) (256,176813) (512,299006) (1024,472417)
(2048,628732) (4096,748980) (8192,837345) (16384,861529)
(32768,869331) (65536,873279) (131072,875953)
};

\end{groupplot}
\end{tikzpicture}
\end{figure}
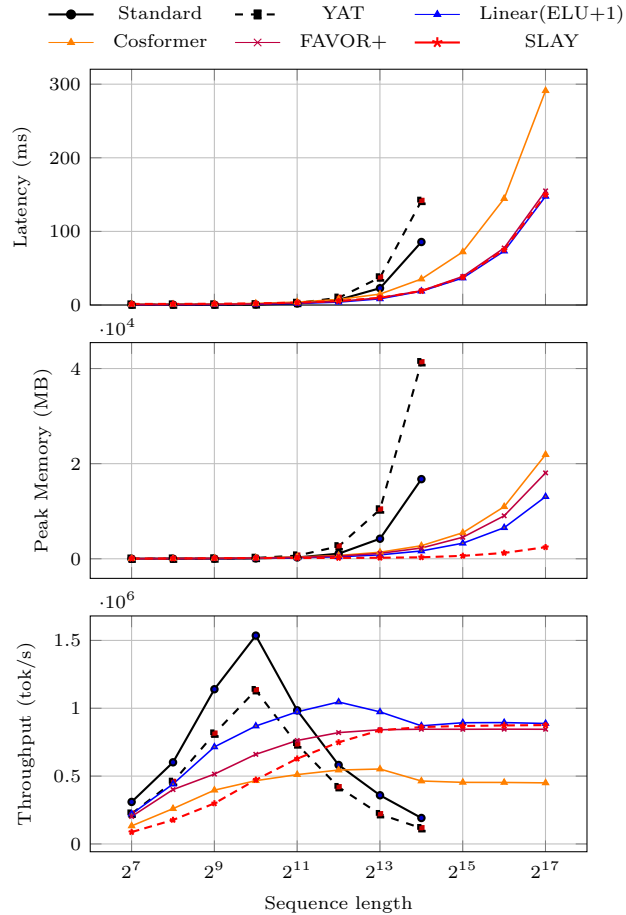

\paragraph{Results overview.}
Figure~\ref{fig:scaling} summarizes the scaling behavior. Quadratic attention mechanisms (standard softmax and exact YAT) exhibit rapidly increasing latency and memory usage, failing beyond \textbf{16K} tokens. In contrast, linear attention methods scale approximately linearly and remain stable up to \textbf{128K} tokens.

SLAY's scaling behavior closely follows and in some cases outperforms other linear attention mechanisms while substantially reducing memory usage relative to its quadratic counterparts. At long sequence lengths, SLAY uses orders of magnitude less memory than exact methods, enabling sequences well beyond where quadratic methods encounter out-of-memory failures. SLAY sustains high throughput comparable to other linear baselines, demonstrating that its added geometric structure does not compromise scalability (see Appendix~\ref{app:scaling-analysis} for detailed comparisons).

\subsection{SLAY on Synthetic Tasks}
\label{subsec:synthetic-experiments}

We evaluate all attention mechanisms on a suite of synthetic sequence modeling tasks
covering basic operations, arithmetic, long-range dependencies, memory, and reasoning. The full results and descriptions of these tasks can be found in the Appendix.

\begin{table}[htbp]
\centering
\small
\caption{Average accuracy by task category. The name of different methods is the same as in Fig. \ref{fig:scaling}.}
\label{tab:summary}

\textbf{(a) Core capabilities}

\makebox[\columnwidth][c]{%
\setlength{\tabcolsep}{3pt}%
\begin{tabular}{lcccc}
\toprule
Method & Basic & Arithmetic & Long-Range & Memory \\
\midrule
Standard        & 0.60 & 0.54 & 0.68 & 0.73 \\
YAT  & 0.52 & 0.53 & 0.68 & 0.73 \\
FAVOR+       & 0.45 & 0.57 & 0.68 & 0.73 \\
Linear          & 0.44 & 0.60 & 0.67 & 0.72 \\
\textbf{SLAY}   & \textbf{0.57} & \textbf{0.57} & \textbf{0.68} & \textbf{0.73} \\
\bottomrule
\end{tabular}%
}

\vspace{0.8em}

\textbf{(b) Higher-level behavior}

\makebox[\columnwidth][c]{%
\begin{tabular}{lccc}
\toprule
Method & Patterns & Reasoning & Robustness \\
\midrule
Standard        & 0.83 & 0.56 & 0.80 \\
YAT  & 0.81 & 0.56 & 0.80 \\
FAVOR+       & 0.85 & 0.56 & 0.80 \\
Linear          & 0.85 & 0.56 & 0.80 \\
\textbf{SLAY}   & \textbf{0.86} & \textbf{0.57} & \textbf{0.80} \\
\bottomrule
\end{tabular}%
}

\end{table}

Overall, SLAY matches or exceeds the performance of existing linearized attention mechanisms across the full spectrum of evaluated tasks. As shown in Table~\ref{tab:summary}, SLAY consistently performs on par with the strongest linear baselines on core capabilities such as long‑range dependency modeling and memory, while closing much of the gap to standard attention on basic operations. Notably, SLAY demonstrates competitive or superior accuracy in arithmetic and pattern‑based tasks, and achieves the highest average performance on higher‑level behaviors, including patterns and reasoning. These results indicate that SLAY preserves the representational strengths of linear attention while, in many settings, providing measurable gains over comparable linearized approaches.

\subsection{Extreme Classification Evaluation}
\label{subsec:extreme-classification}

We further evaluate the proposed approach in an extreme classification setting using the \textsc{Eurlex} dataset. Specifically, we compare \textbf{SLAY} against the Performer FAVOR+ mechanism.

\begin{table}[htbp]
\centering
\caption{Extreme Classification Benchmark Results on Eurlex-4K}
\label{tab:extreme_results}
\begin{tabular}{lcc}
\toprule
Metric & SLAY (Approx) & Performer \\
\midrule
P@1 & \textbf{0.4978} & 0.3442 \\
P@3 & \textbf{0.3953} & 0.2703 \\
P@5 & \textbf{0.3261} & 0.2263 \\
PSP@1 & \textbf{0.9391} & 0.5970 \\
PSP@3 & \textbf{0.7862} & 0.4785 \\
PSP@5 & \textbf{0.6693} & 0.4141 \\
\bottomrule
\end{tabular}
\end{table}

As shown in Table~\ref{tab:extreme_results}, \textbf{SLAY} achieves higher scores across the board versus the Performer. These results suggest that SLAY provides a favorable trade-off between modeling capacity and efficiency in large-label regimes, making it well-suited for extreme classification tasks.

\subsection{SLAYformer: SLAY in Transformers}
\label{subsec:slayformer}

We now evaluate \textbf{SLAY} in the context of fully trained transformer language models, focusing on how our
\emph{linear‑time} attention mechanism compares to both standard softmax attention and other linear-time approximations such as Performers and Cosformers.
To this end, we instantiate a \textbf{SLAYformer}, i.e., a GPT‑2 Small–scale model equipped with Spherical
Linearized YAT attention, and compare it against both quadratic softmax attention and state‑of‑the‑art
linear attention baselines under strictly controlled training conditions.

All models are trained to the same token budget in accordance with the Chinchilla scaling law, and share
identical architecture, optimization hyperparameters, and training data. This setup isolates the effect
of the attention mechanism itself.
\begin{table}[htbp]
\centering
\small
\caption{\small{
Validation performance comparison of attention mechanisms after satisfying the Chinchilla scaling law. Full model details available in Appendix \ref{app:experiments}.
}}
\label{tab:attention-val-comparison}
\resizebox{1\columnwidth}{!}{%
\begin{tabular}{llcc}
\toprule
Method & Complexity & Val Loss $\downarrow$ & PPL $\downarrow$ \\
\midrule
\multicolumn{4}{l}{\textit{Quadratic Attention}} \\
\quad \textbf{Yat (Exact)} & $O(n^2)$ & \textbf{4.5747} & \textbf{97.03} \\
\quad Standard Softmax & $O(n^2)$ & 4.6417 & 103.73 \\
\quad Yat (Spherical) & $O(n^2)$ & 4.7180 & 112.00 \\
\midrule
\multicolumn{4}{l}{\textit{Linear Attention}} \\
\quad \textbf{SLAYformer (Ours)} & $O(n)$ & \textbf{4.6760} & \textbf{107.35} \\
\quad Linear (ELU+1) & $O(n)$ & 5.0884 & 161.99 \\
\quad Cosformer & $O(n)$ & 5.1983 & 180.97 \\
\quad FAVOR+ (Performer) & $O(n)$ & 5.4524 & 233.32 \\
\bottomrule
\end{tabular}%
}
\end{table}

Table~\ref{tab:attention-val-comparison} reports validation loss and perplexity at convergence.
Strikingly, the SLAYformer achieves performance \emph{within a narrow margin of standard softmax attention},
despite operating with linear time and memory complexity.
In contrast, existing linear‑time attention mechanisms such as FAVOR+ (Performer), Linear (ELU+1) and Cosformer exhibit a
substantial degradation in both loss and perplexity relative to standard softmax attention. Notably, the quadratic versions of SLAY, namely Exact YAT and Spherical YAT offer very competitive performances, even surpassing Standard Softmax in the case of Exact Yat. 

To our knowledge, this is the first demonstration of a linear‑time attention mechanism that approaches
standard softmax attention this closely in a fully trained transformer language model.

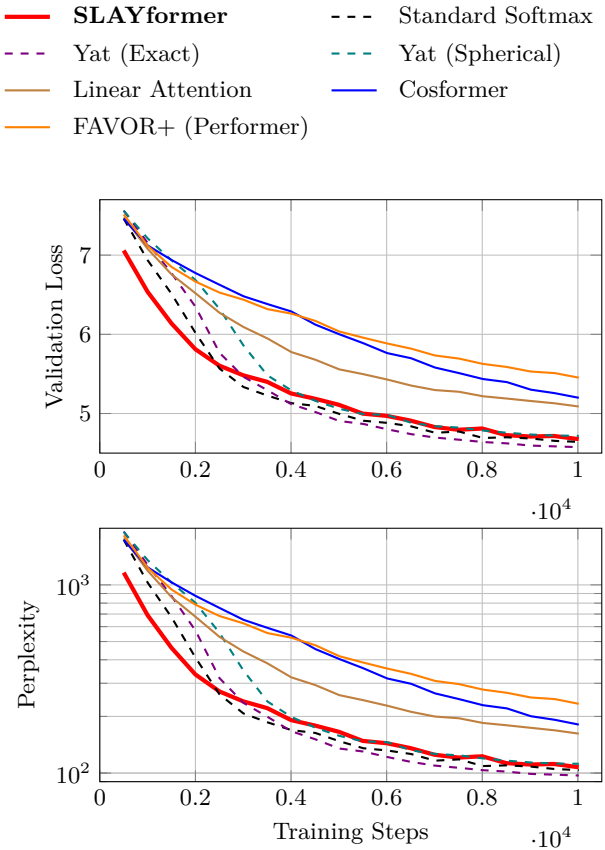
\begin{figure}[h]
\centering
\small
\captionsetup{width=0.95\columnwidth}
\caption{\small{
Validation loss (top) and validation perplexity (bottom) as a function of training steps after satisfying the Chinchilla scaling law with 125M parameters and 2.5B tokens.
}}
\label{fig:val-curves-group}
\vspace{0.5em}

\begin{tikzpicture}
\begin{groupplot}[
    group style={
        group size=1 by 2,
        horizontal sep=1.5cm
    },
    width=1\columnwidth,
    height=0.6\columnwidth,
    xlabel={Training Steps},
    grid=both,
    tick label style={font=\small},
    label style={font=\small},
    legend style={
        at={(0.4, 1.20)},
        anchor=south,
        legend columns=2,
        draw=none,
        /tikz/column sep=6pt,
        row sep=2pt
    },
    legend cell align=left
]

\nextgroupplot[
    ylabel={Validation Loss},
    xlabel={},
    xmin=0,
    xmax=10500,
    ymin=4.5,
    ymax=7.7,
]

\addplot[ultra thick, red, solid] table[x=Step, y=Loss] {data/slay_loss.dat};
\addlegendentry{\textbf{SLAYformer}}

\addplot[thick, dashed, black] table[x=Step, y=Loss] {data/softmax_loss.dat};
\addlegendentry{Standard Softmax}

\addplot[thick, dashed, violet] table[x=Step, y=Loss] {data/yat_exact_loss.dat};
\addlegendentry{Yat (Exact)}

\addplot[thick, dashed, teal] table[x=Step, y=Loss] {data/yat_spherical_loss.dat};
\addlegendentry{Yat (Spherical)}

\addplot[thick, solid, brown] table[x=Step, y=Loss] {data/linear_loss.dat};
\addlegendentry{Linear Attention}

\addplot[thick, solid, blue] table[x=Step, y=Loss] {data/cosformer_loss.dat};
\addlegendentry{Cosformer}

\addplot[thick, solid, orange] table[x=Step, y=Loss] {data/performer_loss.dat};
\addlegendentry{FAVOR+ (Performer)}

\nextgroupplot[
    ylabel={Perplexity},
    xmin=0,
    xmax=10500,
    ymin=90,
    ymax=2000,
    ymode=log,
]

\addplot[ultra thick, red, solid] table[x=Step, y=PPL] {data/slay_ppl.dat};

\addplot[thick, dashed, black] table[x=Step, y=PPL] {data/softmax_ppl.dat};

\addplot[thick, dashed, violet] table[x=Step, y=PPL] {data/yat_exact_ppl.dat};

\addplot[thick, dashed, teal] table[x=Step, y=PPL] {data/yat_spherical_ppl.dat};

\addplot[thick, solid, brown] table[x=Step, y=PPL] {data/linear_ppl.dat};

\addplot[thick, solid, blue] table[x=Step, y=PPL] {data/cosformer_ppl.dat};

\addplot[thick, solid, orange] table[x=Step, y=PPL] {data/performer_ppl.dat};

\end{groupplot}
\end{tikzpicture}
\end{figure}

Figure~\ref{fig:val-curves-group} provide further insight into the training dynamics.
Across the entire optimization trajectory, the SLAYformer exhibits stable training behavior and converges
at a rate comparable to softmax and yat attention.
Importantly, the gap between SLAY and softmax remains small throughout training, while the gap between SLAY
and other linear attention methods remains consistently large.
This indicates that SLAY’s advantage is not limited to early optimization or transient effects, but persists
through full convergence.

Taken together, these results suggest that \textbf{SLAY represents a qualitative advance over prior linear‑time
attention mechanisms.}
Rather than trading expressivity for efficiency, SLAYformers retain modeling performance close to that of
quadratic softmax attention while enabling scalable training and inference at long context lengths.
This positions SLAY as a strong candidate for a new state of the art among linear‑time attention transformers.

This empirical behavior is particularly notable in light of the theoretical properties of the YAT kernel
introduced in Section~\ref{sec:intro}.
By enabling an efficient linear implementation of YAT, SLAY makes it possible to train large‑scale
transformers that preserve the kernel’s geometric and theoretical structure, without incurring the
prohibitive quadratic cost of exact attention.
As a result, \textbf{SLAY provides a practical and principled path toward long‑context transformer models that
combine scalability and geometric integrity with near‑softmax performance.}

\section{Conclusion}
\label{sec:conclusion}

We introduced SLAY (Spherical Linearized Attention with Yat kernel), a linear‑time attention mechanism that makes the $\mathbb{E}$‑product from Neural Matter Networks practical for long‑context sequence modeling. By enforcing unit‑norm constraints and using a strictly positive random‑feature approximation derived via Bernstein’s theorem, SLAY yields a bounded and factorizable attention kernel compatible with existing linear attention frameworks.

We show theoretically that SLAY preserves key structural properties of the $\mathbb{E}$‑product, including self‑regulation and superposition. Empirically, SLAYformers train stably, scale linearly in time and memory, and process sequences up to \textbf{30$\times$ longer} than standard softmax attention.

Across all evaluated settings, SLAYformers achieve performance that is very close to its quadratic but precise counterparts: standard softmax attention, exact yat and spherical yat, all the while substantially outperforming prior linear‑time attention mechanisms such as Performers, Cosformers and Linear Attention. To the best of our knowledge, \textbf{SLAY is the first linear‑time attention mechanism to consistently approach softmax‑level performance without sacrificing scalability}, representing a significant step toward practical and efficient long‑context Transformers.

\section*{Impact Statement}

This work advances efficient Transformer architectures by introducing a linear‑time attention mechanism that closely matches the performance of standard softmax attention. By narrowing the long‑standing gap between quadratic attention and scalable linear alternatives, SLAYformers reduce the need to trade modeling quality for computational efficiency.

As a result, SLAY has the potential to lower the computational and energy costs of training and deploying long‑context models, supporting more sustainable and accessible large‑scale language modeling. More broadly, this work also highlights the promise of geometry‑aware kernel design for efficient attention, and we support reproducibility through detailed experimental settings and a publicly available code repository.
\bibliography{references}
\bibliographystyle{icml2026}


\clearpage
\onecolumn

\section*{Appendix}
\appendix
\section{Background Results}
\label{app:background}

This section collects standard background facts used in the main text.

\begin{adjustwidth}{1.5em}{1.5em}
\begin{lemma}[Bernstein Representation Applicability]
\label{lem:complete-monotone}
For $\epsilon > 0$ and $C = 2 + \epsilon$, the variable $y = C - 2x$ satisfies $y \geq \epsilon > 0$ for all $x \in [-1,1]$. Hence Bernstein's representation $1/(C - 2x) = \int_0^\infty e^{-s(C-2x)}\,ds$ applies throughout the domain.
\end{lemma}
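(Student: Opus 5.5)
The argument splits into two parts: a one-line inequality, then an elementary evaluation of the Laplace integral for a fixed positive argument. Bernstein's theorem in full generality is not needed; we only need the explicit representation of $1/y$ and its validity for every $y$ that actually occurs.

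\textbf{Step 1 (positivity).} Fix $\epsilon>0$ and $x\in[-1,1]$. Since $x\le 1$, we have $-2x\ge -2$, and therefore
\[
y = C-2x = 2+\epsilon-2x \ge 2+\epsilon-2 = \epsilon > 0 .
\]
The upper endpoint $x=1$ attains the bound. This corresponds to $\widehat{\mathbf{q}}=\widehat{\mathbf{k}}$, where the squared chordal distance vanishes. The bound is uniform in $x$, so $y$ stays in $[\epsilon,\,4+\epsilon]\subset(0,\infty)$.

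\textbf{Step 2 (the integral identity).} For any fixed $y>0$, the integrand $e^{-sy}$ is nonnegative and continuous on $[0,\infty)$. Its improper integral is computed directly:
\[
\int_0^T e^{-sy}\,ds = \frac{1-e^{-Ty}}{y},
\]
and this tends to $1/y$ as $T\to\infty$ because $y>0$. Substituting $y=C-2x$ gives $1/(C-2x)=\int_0^\infty e^{-s(C-2x)}\,ds$ for every $x\in[-1,1]$. This is the concrete instance of Bernstein's representation of the completely monotone function $g(y)=1/y$, whose representing measure is Lebesgue measure on $[0,\infty)$. If we want to cite Bernstein's theorem formally, we can note that $(-1)^n g^{(n)}(y)=n!\,y^{-n-1}>0$ on $(0,\infty)$. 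Since the direct computation already establishes the identity, the citation is only for context.

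\textbf{Where care is needed.} The only place something could fail is the requirement that $y$ be strictly positive. At $\epsilon=0$ and $x=1$ the integral diverges. I would therefore also state the uniform bound $e^{-s(C-2x)}\le e^{-s\epsilon}$, which follows from Step 1. This dominating function is integrable and independent of $x$. It justifies the later exchange of the integral with the factor $x^2$ in Eq.~\eqref{eq:integral}, and it gives uniform convergence of the truncated integrals over the whole domain.
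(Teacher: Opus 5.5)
Your proof is correct. Step 1 is the same bound $C-2x\ge C-2=\epsilon$ that the paper uses. The difference is in Step 2. The paper justifies $1/(C-2x)=\int_0^\infty e^{-s(C-2x)}\,ds$ by noting that $g(y)=1/y$ is completely monotone, since $g^{(n)}(y)=(-1)^n n!/y^{n+1}$, and then invoking Bernstein's theorem. You instead verify the identity by integrating $e^{-sy}$ directly and mention Bernstein only as context.

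Your route is the more self-contained one. Bernstein's theorem by itself only guarantees that some nonnegative representing measure exists (and is unique). Identifying that measure as Lebesgue measure on $[0,\infty)$ is exactly what your direct integration supplies. The paper's citation mainly places the identity in the completely-monotone framework it appeals to elsewhere.

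One small remark: your domination bound $e^{-s(C-2x)}\le e^{-s\epsilon}$ is correct and does give uniform convergence of the truncated integrals over $[-1,1]$. However, it is not needed to move $x^2$ inside the integral in Eq.~\eqref{eq:integral}. Since $x^2$ does not depend on $s$, that step is just linearity.
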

\begin{proof}
See Appendix~\ref{app:additional-proofs}.
\end{proof}
\end{adjustwidth}

\section{Geometric Interpretation and Invariances}
\label{app:geom}

This appendix expands on the geometric view of the spherical \E-product discussed in Section~\ref{sec:method}. On $\Sph^{d-1}$, the squared chordal distance satisfies
$d_{\Sph^{d-1}}(\widehat{\mathbf{q}},\widehat{\mathbf{k}})^2
= 2\bigl(1-\widehat{\mathbf{q}}^\top \widehat{\mathbf{k}}\bigr)$,
so the spherical kernel can be interpreted as an $\epsilon$-regularized chordal-distance interaction.

\begin{adjustwidth}{1.5em}{1.5em}
\begin{proposition}[Geometric Origin]
\label{prop:geometric}
The spherical \E-product is an $\epsilon$-regularized chordal-distance interaction on $\Sph^{d-1}$:
\[
\E_{\mathrm{sph}}(\widehat{\mathbf{q}}, \widehat{\mathbf{k}})
=
\frac{\langle \widehat{\mathbf{q}}, \widehat{\mathbf{k}} \rangle^2}
     {d_{\Sph^{d-1}}(\widehat{\mathbf{q}}, \widehat{\mathbf{k}})^2 + \epsilon},
\]
where the numerator captures directional alignment and the denominator enforces locality via chordal proximity.
\end{proposition}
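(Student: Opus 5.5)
The plan is a direct algebraic verification. Start from the definition of the \E-product in Eq.~\eqref{eq:eproduct} evaluated at the normalized vectors $\widehat{\mathbf{q}},\widehat{\mathbf{k}}\in\Sph^{d-1}$. The numerator is $(\widehat{\mathbf{q}}^\top\widehat{\mathbf{k}})^2=\langle \widehat{\mathbf{q}},\widehat{\mathbf{k}}\rangle^2$ by definition of the Euclidean inner product, so only the denominator needs work.

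Expand the squared Euclidean distance using bilinearity and the unit-norm constraint:
\[
\|\widehat{\mathbf{q}}-\widehat{\mathbf{k}}\|_2^2=\|\widehat{\mathbf{q}}\|_2^2+\|\widehat{\mathbf{k}}\|_2^2-2\widehat{\mathbf{q}}^\top\widehat{\mathbf{k}}=2\bigl(1-\widehat{\mathbf{q}}^\top\widehat{\mathbf{k}}\bigr).
\]
By the definition of the squared chordal distance on $\Sph^{d-1}$ given in Section~\ref{sec:method}, this equals $d_{\Sph^{d-1}}(\widehat{\mathbf{q}},\widehat{\mathbf{k}})^2$. Adding $\epsilon$ then gives exactly the stated denominator, which also matches the form $C-2x$ of Eq.~\eqref{eq:spherical} with $x=\widehat{\mathbf{q}}^\top\widehat{\mathbf{k}}$ and $C=2+\epsilon$.

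To make the quotient well defined, use Lemma~\ref{lem:complete-monotone}: the denominator is at least $\epsilon>0$ on the whole domain. The interpretive clause, that the numerator measures alignment and the denominator measures locality, can be justified briefly. The numerator is an even function of $\cos\theta$, where $\theta$ is the angle between the vectors. The denominator is increasing in the chordal distance $2\sin(\theta/2)$, so the kernel decays as the points separate on the sphere.

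No step is a real obstacle. The only care needed is to use the intended normalization $\widehat{\mathbf{k}}=\mathbf{k}/\|\mathbf{k}\|_2$; the displayed definition in the main text contains a typo with $\mathbf{q}$ in the numerator, and the paper itself uses unit-norm $\widehat{\mathbf{k}}$ throughout.
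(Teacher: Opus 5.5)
Your argument is correct and follows the paper's own derivation: expand $\|\widehat{\mathbf{q}}-\widehat{\mathbf{k}}\|_2^2$ under the unit-norm constraint to get $2(1-\widehat{\mathbf{q}}^\top\widehat{\mathbf{k}})$, identify this with the squared chordal distance, and add $\epsilon$; you are also right that $\widehat{\mathbf{k}}=\mathbf{q}/\|\mathbf{k}\|_2$ is a typo. One correction to your interpretive remark: only the factor $1/(d_{\Sph^{d-1}}(\widehat{\mathbf{q}},\widehat{\mathbf{k}})^2+\epsilon)$ decays with separation, whereas the full kernel $x^2/(C-2x)$ is not monotone in $\theta$, since it vanishes at $\theta=\pi/2$ and rises back to $1/(4+\epsilon)$ at antipodal points (its derivative $2x(C-x)/(C-2x)^2$ changes sign at $x=0$).
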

\end{adjustwidth}

Since the kernel depends only on
$\widehat{\mathbf{q}}^\top \widehat{\mathbf{k}}$,
it belongs to the class of isotropic kernels on the sphere.
All spherical positive-definiteness claims in the main text assume $d\ge 2$,
where Schoenberg's characterization applies \citep{schoenberg1942}.

\begin{adjustwidth}{1.5em}{1.5em}
\begin{remark}[Geometric Invariances]
\label{rem:invariances}
The spherical $\E$-product is invariant under
(i) rotations:
$\E_{\mathrm{sph}}(\mathbf{R}\widehat{\mathbf{q}}, \mathbf{R}\widehat{\mathbf{k}})
= \E_{\mathrm{sph}}(\widehat{\mathbf{q}}, \widehat{\mathbf{k}})$
for all $\mathbf{R} \in SO(d)$,
and (ii) uniform scaling prior to normalization.
Note that while
$(\widehat{\mathbf{q}}^\top \widehat{\mathbf{k}})^2$
is even under sign flips, the full kernel
\[
\E_{\mathrm{sph}}(\widehat{\mathbf{q}},\widehat{\mathbf{k}})
=
\frac{(\widehat{\mathbf{q}}^\top \widehat{\mathbf{k}})^2}
     {(2+\epsilon)-2\widehat{\mathbf{q}}^\top\widehat{\mathbf{k}}}
\]
is not invariant under $\widehat{\mathbf{q}} \mapsto -\widehat{\mathbf{q}}$
in general.
\end{remark}
\end{adjustwidth}

\begin{adjustwidth}{1.5em}{1.5em}
\begin{proposition}[PRF Unbiasedness]
\label{prop:prf-unbiased}
For $\widehat{\mathbf{q}}, \widehat{\mathbf{k}} \in \Sph^{d-1}$ and
$\{\boldsymbol{\omega}_i\}_{i=1}^D
\stackrel{\mathrm{iid}}{\sim} \mathcal{N}(\mathbf{0}, \mathbf{I}_d)$:
\[
\mathbb{E}\!\left[
\left\langle
\phi_{\text{PRF}}(\widehat{\mathbf{q}}; s),
\phi_{\text{PRF}}(\widehat{\mathbf{k}}; s)
\right\rangle
\right]
=
e^{2s\,\widehat{\mathbf{q}}^\top \widehat{\mathbf{k}}}.
\]
\end{proposition}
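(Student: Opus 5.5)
The plan is to expand the inner product into its $D$ coordinates, use linearity of expectation together with the fact that the $\boldsymbol{\omega}_i$ are identically distributed, and reduce everything to the Gaussian moment generating function. Writing out the definition,
\[
\left\langle \phi_{\text{PRF}}(\widehat{\mathbf{q}}; s), \phi_{\text{PRF}}(\widehat{\mathbf{k}}; s)\right\rangle
= \frac{1}{D}\sum_{i=1}^D \exp\!\bigl(\sqrt{2s}\,\boldsymbol{\omega}_i^\top(\widehat{\mathbf{q}}+\widehat{\mathbf{k}}) - 2s\bigr).
\]
Each summand has the same law, so the expectation equals that of a single term with $\boldsymbol{\omega}\sim\mathcal{N}(\mathbf{0},\mathbf{I}_d)$. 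Independence across $i$ is not needed for unbiasedness; it matters only for variance.

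The key step is the Gaussian MGF identity $\mathbb{E}[e^{\boldsymbol{\omega}^\top \mathbf{v}}] = e^{\|\mathbf{v}\|_2^2/2}$ for a fixed $\mathbf{v}\in\R^d$. It follows because $\boldsymbol{\omega}^\top\mathbf{v}\sim\mathcal{N}(0,\|\mathbf{v}\|_2^2)$, or directly by completing the square coordinatewise in the Gaussian integral. I apply it with $\mathbf{v}=\sqrt{2s}\,(\widehat{\mathbf{q}}+\widehat{\mathbf{k}})$, which gives
\[
\mathbb{E}\bigl[e^{\sqrt{2s}\,\boldsymbol{\omega}^\top(\widehat{\mathbf{q}}+\widehat{\mathbf{k}})}\bigr] = e^{s\|\widehat{\mathbf{q}}+\widehat{\mathbf{k}}\|_2^2}.
\]

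Next I use the unit-norm constraint. This is the only place where $\widehat{\mathbf{q}},\widehat{\mathbf{k}}\in\Sph^{d-1}$ enters:
\[
\|\widehat{\mathbf{q}}+\widehat{\mathbf{k}}\|_2^2 = \|\widehat{\mathbf{q}}\|_2^2+\|\widehat{\mathbf{k}}\|_2^2+2\widehat{\mathbf{q}}^\top\widehat{\mathbf{k}} = 2+2\widehat{\mathbf{q}}^\top\widehat{\mathbf{k}}.
\]
Multiplying by the deterministic factor $e^{-2s}$ cancels the constant $e^{2s}$ and leaves $e^{2s\,\widehat{\mathbf{q}}^\top\widehat{\mathbf{k}}}$. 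Averaging $D$ identical expectations with weight $1/D$ returns the same value, which proves the claim. For general (non-unit) inputs, the offset $-s$ would have to be replaced by $-s\|\mathbf{u}\|_2^2$; the spherical constraint is exactly what makes the fixed offset $-s$ correct.

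There is no real obstacle. The only points needing care are the following.
\begin{itemize}
\item The bookkeeping of the $\sqrt{2s}$ scaling: it produces $2s$ inside the squared norm, halved to $s$ by the MGF.
\item Finiteness of the expectation: the MGF of a Gaussian is finite for every $\mathbf{v}$, so integrability is automatic for all $s\ge 0$.
\item The boundary case $s=0$: this is trivially consistent, since both sides equal $1$.
\end{itemize}
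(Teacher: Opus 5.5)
Your proposal is correct and follows the same route as the paper. Both expand the inner product into $D$ identically distributed terms, apply the Gaussian moment generating function to obtain $e^{-2s}e^{s\|\widehat{\mathbf{q}}+\widehat{\mathbf{k}}\|^2}$, and use the unit-norm constraint to reduce $\|\widehat{\mathbf{q}}+\widehat{\mathbf{k}}\|^2$ to $2+2\widehat{\mathbf{q}}^\top\widehat{\mathbf{k}}$; your added remarks on integrability, the case $s=0$, and the $-s\|\mathbf{u}\|_2^2$ offset for non-unit inputs are correct refinements of the paper's closing comment.
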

\begin{proof}
See Appendix~\ref{app:additional-proofs}.
\end{proof}
\end{adjustwidth}

\begin{adjustwidth}{1.5em}{1.5em}
\begin{lemma}[Positive Mixture Closure]
\label{lem:mixture-pd}
If $\{k_s\}_{s \geq 0}$ is a family of positive-definite kernels on $\mathcal{X}$
and $w(s) \geq 0$ is a nonnegative measure, then
$k(x,y) = \int_0^\infty w(s)\,k_s(x,y)\,ds$
is PD on $\mathcal{X}$ (a standard closure property of PD kernels;
see, e.g., \citep{aronszajn1950,scholkopf2002learning}).
\end{lemma}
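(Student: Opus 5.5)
The statement to prove is Lemma~\ref{lem:mixture-pd}: a nonnegative mixture $k(x,y)=\int_0^\infty w(s)\,k_s(x,y)\,ds$ of positive-definite kernels is itself positive definite. The plan is to check the definition of positive (semi)definiteness directly on finite point sets and pass it through the integral. Fix $n\in\mathbb{N}$, points $x_1,\dots,x_n\in\mathcal{X}$ and coefficients $c_1,\dots,c_n\in\R$ (or $\mathbb{C}$, with conjugates). We must show $\sum_{i,j} c_i c_j\, k(x_i,x_j)\ge 0$; symmetry of $k$ follows immediately from symmetry of each $k_s$, since the integral preserves it pointwise.

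The key step is an exchange of the finite sum with the integral. By linearity of the integral (the sum is finite, so no convergence issue beyond the existence of each $k(x_i,x_j)$),
\[
\sum_{i,j=1}^n c_i c_j\, k(x_i,x_j) = \int_0^\infty w(s)\Bigl(\sum_{i,j=1}^n c_i c_j\, k_s(x_i,x_j)\Bigr)\,ds .
\]
For every fixed $s$, the bracket is nonnegative because $k_s$ is PD. Since $w(s)\ge 0$, the integrand is a nonnegative function of $s$, so its integral is nonnegative. Written with a general nonnegative measure $\mu$ in place of $w(s)\,ds$, the argument is identical.

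The only step requiring care, and the one I expect to be the main obstacle, is the implicit hypothesis: the mixture must be well defined. That is, $s\mapsto k_s(x,y)$ must be measurable and $\int w(s)|k_s(x,y)|\,ds<\infty$ for every pair $(x,y)$; otherwise $k$ is not defined or the linearity step fails. I will state this as the standing assumption implicit in the lemma. I will then note that it holds in our application. There $k_s(x)=x^2e^{2sx}$ with weight $e^{-sC}$, and $|x^2 e^{2sx}e^{-sC}|\le e^{-s\epsilon}$, which is integrable on $[0,\infty)$. Each $k_s$ is PD on $\Sph^{d-1}$, being the product of the PD kernels $(\widehat{\mathbf{q}}^\top\widehat{\mathbf{k}})^2$ and $e^{2s\widehat{\mathbf{q}}^\top\widehat{\mathbf{k}}}$ (Schur product theorem). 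This yields positive definiteness of $\E_{\mathrm{sph}}$ via Eq.~\eqref{eq:integral}.

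Finally, I will remark on strict definiteness. If strict PD is wanted, it suffices that $k_s$ is strictly PD for $s$ in a set of positive $w$-measure. Then for $c\neq 0$ the integrand is strictly positive on that set, so the integral is strictly positive. The same positivity argument applies verbatim to the Gauss--Laguerre discretization, since the weights $w_r=\alpha_r/C$ are positive. Thus the quadrature kernel in Remark~\ref{rem:feature-target} is also PD.
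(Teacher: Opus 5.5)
Your proof is correct. The paper gives no argument of its own for this lemma beyond citing standard references, and your direct check of the quadratic form is exactly the standard proof behind that citation: you exchange the finite sum with the integral and use $w(s)\ge 0$ together with positive semidefiniteness of each $k_s$. Your explicit measurability and integrability hypothesis, with its verification via $|e^{-sC}x^2e^{2sx}|\le e^{-s\epsilon}$, also supplies a condition that the paper's use of this lemma in the proof of Theorem~\ref{thm:pd} leaves implicit.
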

\end{adjustwidth}

\begin{adjustwidth}{1.5em}{1.5em}
\begin{theorem}[Tensor Kernel Decomposition]
\label{thm:tensor-rkhs}
Let $k_1, k_2$ be positive-definite kernels on $\mathcal{X}$
with RKHSs $\mathcal{H}_1, \mathcal{H}_2$.
Then the product kernel
$k(x,y) = k_1(x,y) \cdot k_2(x,y)$
is positive definite with RKHS
$\mathcal{H} = \mathcal{H}_1 \otimes \mathcal{H}_2$
(see, e.g., \citep{aronszajn1950,berlinet2004}).
\end{theorem}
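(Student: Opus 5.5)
The statement is Theorem~\ref{thm:tensor-rkhs}: the product of two PD kernels is PD, and its RKHS is $\mathcal{H}_1\otimes\mathcal{H}_2$. I would prove the two halves separately, first positive definiteness and then the RKHS identification.

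\emph{Positive definiteness (Schur product).} Fix points $x_1,\dots,x_n\in\mathcal{X}$ and form the Gram matrices $K_1=[k_1(x_i,x_j)]$ and $K_2=[k_2(x_i,x_j)]$. Both are symmetric positive semidefinite, so I would factor $K_2=\sum_\ell \lambda_\ell v_\ell v_\ell^\top$ with $\lambda_\ell\ge 0$. For any $c\in\R^n$,
\[
c^\top (K_1\circ K_2)c=\sum_\ell \lambda_\ell\,(c\circ v_\ell)^\top K_1 (c\circ v_\ell)\ \ge 0 .
\]
Hence every Gram matrix of $k=k_1k_2$ is PSD, which means $k$ is PD.

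\emph{RKHS identification.} Let $\Phi_i(x)=k_i(x,\cdot)\in\mathcal{H}_i$ be the canonical feature maps, and set $\Phi(x)=\Phi_1(x)\otimes\Phi_2(x)$ in the Hilbert tensor product $\mathcal{H}_1\otimes\mathcal{H}_2$. By the reproducing property and the definition of the tensor inner product,
\[
\langle\Phi(x),\Phi(y)\rangle=\langle\Phi_1(x),\Phi_1(y)\rangle_{\mathcal{H}_1}\,\langle\Phi_2(x),\Phi_2(y)\rangle_{\mathcal{H}_2}=k(x,y).
\]
This gives a feature-map representation of $k$, which also re-proves PD. I would then invoke the standard feature-map characterization of the RKHS (Aronszajn). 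The RKHS of $k$ is the image of the map $T:F\mapsto\bigl(x\mapsto\langle F,\Phi(x)\rangle\bigr)$ from $\mathcal{H}_1\otimes\mathcal{H}_2$ to functions on $\mathcal{X}$. Its norm is the quotient norm $\|f\|=\inf\{\|F\|: TF=f\}$, so $\mathcal{H}(k)\cong(\mathcal{H}_1\otimes\mathcal{H}_2)/\ker T$.

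\emph{Main obstacle.} The only delicate point is the literal claim $\mathcal{H}=\mathcal{H}_1\otimes\mathcal{H}_2$. Read on $\mathcal{X}\times\mathcal{X}$ with the kernel $k_1(x,x')k_2(y,y')$, the identification is exact: $T$ is injective there, because the elementary tensors $\Phi_1(x)\otimes\Phi_2(y)$ span a dense subspace. Read on the diagonal $\mathcal{X}$, one only gets the restriction, namely the quotient described above (Berlinet--Thomas-Agnan). I would state the result precisely in that form: $\mathcal{H}(k)$ is the restriction of $\mathcal{H}_1\otimes\mathcal{H}_2$ to the diagonal. That is the sense needed when the paper treats $\mathcal{H}_{\text{poly}}\otimes\mathcal{H}_{\exp,s_r}$ as the target space approximated by the sketched feature map $\widetilde{\Psi}_r$.
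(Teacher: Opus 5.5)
Your proof is correct. The paper gives no argument of its own for this statement; it only cites Aronszajn and Berlinet--Thomas-Agnan, and your construction is the one found there. Aronszajn forms the tensor-product RKHS on $\mathcal{X}\times\mathcal{X}$ with kernel $k_1(x,x')k_2(y,y')$. He then identifies the RKHS of $k_1k_2$ with the restrictions of its functions to the diagonal, normed by the smallest norm of an extension, which is your quotient $(\mathcal{H}_1\otimes\mathcal{H}_2)/\ker T$. Your Schur-product computation is a useful addition: it proves the positive-definiteness half with finite-dimensional linear algebra alone, and that is the half the paper relies on in the proof of Theorem~\ref{thm:pd}.

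Your caveat about the literal equality $\mathcal{H}=\mathcal{H}_1\otimes\mathcal{H}_2$ is also correct, and the equality already fails for the paper's own polynomial factor. Take $k_1=k_2=\langle\cdot,\cdot\rangle$ on $\R^d$ (or $\Sph^{d-1}$). Then $\mathcal{H}_1\otimes\mathcal{H}_2\cong\R^{d\times d}$ has dimension $d^2$. The RKHS of $(x^\top y)^2$, however, consists of the quadratic forms $x\mapsto x^\top A x$ with $A$ symmetric, and has dimension $d(d+1)/2$. So $\ker T$ is the space of antisymmetric matrices, and $\mathrm{vec}(\mathbf{u}\mathbf{u}^\top)$ only reaches the symmetric part of $\R^{d^2}$. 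What remains true is that the tensor product is a feature space for the product kernel via $\Phi_1(x)\otimes\Phi_2(x)$, which you establish. That is all the paper needs when it treats $\mathcal{H}_{\mathrm{poly}}\otimes\mathcal{H}_{\exp,s_r}$ as the target of the sketched map $\widetilde{\Psi}_r$.
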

\end{adjustwidth}

\section{Preliminaries: Polynomial Kernel Approximations}
\label{app:prelim-poly}

We summarize three approximations of the degree-2 polynomial kernel
$k_{\text{poly}}(\mathbf{x},\mathbf{y})=(\mathbf{x}^\top \mathbf{y})^2$
used in our implementation and ablations.
Let $\mathbf{x},\mathbf{y}\in\mathbb{R}^d$.

\paragraph{Random Maclaurin (RM).}
Draw Rademacher vectors
$\mathbf{r}_i, \mathbf{s}_i\in\{\pm1\}^d$ and define
\[
\phi_{\text{RM}}(\mathbf{x})
=
\frac{1}{\sqrt{P}}
\bigl[(\mathbf{r}_i^\top \mathbf{x})(\mathbf{s}_i^\top \mathbf{x})\bigr]_{i=1}^P.
\]
Then
$\mathbb{E}[\langle \phi_{\text{RM}}(\mathbf{x}),
\phi_{\text{RM}}(\mathbf{y})\rangle]
=
(\mathbf{x}^\top \mathbf{y})^2$
\citep{karkarnick2012}.
RM is unbiased but can have high variance for small $P$.

\paragraph{Nystrom features.}
Let anchors
$A=\{\mathbf{a}_1,\dots,\mathbf{a}_P\}\subset\mathbb{R}^d$ and define
$K_{AA}\in\mathbb{R}^{P\times P}$ with
$(K_{AA})_{ij}=(\mathbf{a}_i^\top \mathbf{a}_j)^2$.
Given $K_{AA}+\lambda I$, define
\[
\phi_{\text{Nys}}(\mathbf{x})
=
(K_{\mathbf{x}A})(K_{AA}+\lambda I)^{-1/2},
\quad
K_{\mathbf{x}A}
=
[(\mathbf{x}^\top \mathbf{a}_i)^2]_{i=1}^P.
\]
This yields a low-rank approximation whose quality depends on anchor
coverage and conditioning \citep{williams2001nystrom}.

\paragraph{Anchor (low-rank) features.}
Using the same anchors $A$, define
\[
\phi_{\text{Anc}}(\mathbf{x})
=
\frac{1}{\sqrt{P}}
\bigl[(\mathbf{x}^\top \mathbf{a}_i)^2\bigr]_{i=1}^P.
\]
This is a simple low-rank approximation; it is not unbiased in general
but is often stable for small $P$.

\paragraph{Comparison.}
RM is unbiased but variance-dominated at small feature budgets;
Nystrom reduces variance if anchors are well-conditioned;
anchor features are computationally simplest and empirically most stable
at small $P$.

\section{Ablation: Polynomial Approximation Sweep}
\label{app:poly-ablation}

This appendix reports the multi-scale kernel-fidelity sweep referenced in Section~\ref{subsec:poly-ablation}. All variants tie the $\mathrm{QKV}$ and output projection weights and compare outputs against exact kernel-normalized spherical \E-attention.

\begin{table}[t]
\centering
\small
\caption{Multi-scale ablation over feature budgets for polynomial-kernel approximations. We compare attention outputs against \emph{exact kernel-normalized} spherical YAT with tied QKV/out projections. Lower Rel.~$\ell_2$ is better; latency is forward-pass time.}
\label{tab:poly-sweep}
\begin{tabular}{llcccccc}
\toprule
Scale & Method & $T$ & $R$ & $M$ & $P$ & Rel.~$\ell_2\downarrow$ & Latency (ms)$\downarrow$\\
\midrule
Small & Exact (Spherical) & 128 & 2 & 8 & 8 & 0.0000 & 3.12\\
 & Laplace-only &  &  &  &  & 0.5870 & 2.78\\
 & Anchor &  &  &  &  & 0.6626 & 3.82\\
 & Hadamard (shared $\omega$) &  &  &  &  & 0.8237 & 3.34\\
 & Nystrom &  &  &  &  & 22.9072 & 3.41\\
 & TensorSketch &  &  &  &  & 474075.1562 & 5.17\\
 & Random Maclaurin &  &  &  &  & 2195912.7500 & 5.59\\
\addlinespace
Medium & Exact (Spherical) & 256 & 2 & 16 & 16 & 0.0000 & 0.79\\
 & Laplace-only &  &  &  &  & 0.5417 & 16.10\\
 & Anchor &  &  &  &  & 0.5667 & 18.54\\
 & Hadamard (shared $\omega$) &  &  &  &  & 0.6609 & 17.44\\
 & Nystrom &  &  &  &  & 61.6529 & 18.46\\
 & TensorSketch &  &  &  &  & 214115.9844 & 19.01\\
 & Random Maclaurin &  &  &  &  & 1715766.8750 & 18.80\\
\addlinespace
Large & Exact (Spherical) & 512 & 2 & 32 & 32 & 0.0000 & 5.02\\
 & Laplace-only &  &  &  &  & 0.4850 & 1905.80\\
 & Anchor &  &  &  &  & 0.4939 & 489.42\\
 & Hadamard (shared $\omega$) &  &  &  &  & 0.6793 & 1932.07\\
 & Nystrom &  &  &  &  & 28.1970 & 569.64\\
 & TensorSketch &  &  &  &  & 461739.0312 & 547.76\\
 & Random Maclaurin &  &  &  &  & 1772757.5000 & 551.43\\
\addlinespace
\bottomrule
\end{tabular}
\end{table}

\section{Integral Representation of the Spherical \E-Product}
\label{app:integral}

In this appendix, we provide a detailed derivation of the integral representation used to linearize the spherical \E-product kernel.

Recall that under the unit-norm constraint
$\widehat{\mathbf{q}}, \widehat{\mathbf{k}} \in \Sph^{d-1}$,
the \E-product reduces to
\begin{equation}
\E_{\mathrm{sph}}(\widehat{\mathbf{q}}, \widehat{\mathbf{k}})
=
\frac{x^2}{C - 2x},
\qquad
x = \widehat{\mathbf{q}}^\top \widehat{\mathbf{k}} \in [-1,1],
\quad
C = 2 + \epsilon.
\label{eq:app-spherical}
\end{equation}

The function $g(y) = 1/y$ is completely monotone on $(0,\infty)$ and therefore admits the Bernstein representation
\begin{equation}
\frac{1}{y} = \int_0^\infty e^{-s y}\, ds.
\label{eq:app-bernstein}
\end{equation}

Applying this identity with $y = C - 2x$, we obtain
\begin{align}
\E_{\mathrm{sph}}(\widehat{\mathbf{q}}, \widehat{\mathbf{k}})
&= x^2 \int_0^\infty e^{-s(C - 2x)} \, ds \\
&= \int_0^\infty e^{-sC} \, x^2 e^{2s x} \, ds.
\label{eq:app-integral}
\end{align}

This representation expresses the spherical \E-product as a positively weighted mixture of a degree-2 polynomial kernel and an exponential kernel in the angular variable $x$. This decomposition forms the basis for the random-feature approximation introduced in the main text.

\section{Random Feature Construction and Unbiasedness}
\label{app:features}

In this appendix, we provide additional details on the random-feature construction used to approximate the integrand appearing in the spherical \E-product representation and justify its unbiasedness.

Recall from Eq.~\eqref{eq:integral} that the spherical \E-product admits the decomposition
\[
\E_{\mathrm{sph}}(\widehat{\mathbf{q}}, \widehat{\mathbf{k}})
=
\int_0^\infty e^{-sC} \, x^2 e^{2s x} \, ds,
\qquad
x = \widehat{\mathbf{q}}^\top \widehat{\mathbf{k}}.
\]

\paragraph{Polynomial component.}
The term
$x^2 = (\widehat{\mathbf{q}}^\top \widehat{\mathbf{k}})^2$
corresponds to a homogeneous degree-2 polynomial kernel.
This kernel admits an explicit feature map given by
\[
\phi_{\mathrm{poly}}(\mathbf{u})
=
\mathrm{vec}(\mathbf{u}\mathbf{u}^\top),
\]
or an approximate variant implemented via tensor sketching.
In both cases, the inner product of feature maps yields an unbiased
estimator of the polynomial kernel.

\paragraph{Exponential component.}
The exponential term $e^{2s x}$ is approximated using strictly positive
random features.
For random projections $\boldsymbol{\omega}$ drawn from a Gaussian or
orthogonal distribution, the feature map
\[
\phi_{\mathrm{PRF}}(\mathbf{u}; s)
=
\frac{1}{\sqrt{D}}
\exp\!\left(\sqrt{2s}\,\boldsymbol{\omega}^\top \mathbf{u} - s\right)
\]
satisfies
\[
\mathbb{E}\big[
\langle
\phi_{\mathrm{PRF}}(\widehat{\mathbf{q}}; s),
\phi_{\mathrm{PRF}}(\widehat{\mathbf{k}}; s)
\rangle
\big]
=
e^{2s\, \widehat{\mathbf{q}}^\top \widehat{\mathbf{k}}}.
\]

\paragraph{Tensor product approximation.}
Since the polynomial component can be computed exactly (or approximated
in practice) and the exponential component is estimated with unbiased
PRFs, their tensor product
\[
\phi_{\mathrm{poly}}(\mathbf{u})
\otimes
\phi_{\mathrm{PRF}}(\mathbf{u}; s)
\]
is an unbiased estimator of the product kernel
$x^2 e^{2s x}$ by linearity of expectation.
Approximating the outer integral using quadrature preserves unbiasedness
up to the discretization error introduced by the numerical integration
scheme.

\paragraph{On ``pure Laplace'' forms.}
If one insists on a nonnegative Laplace mixture of plain exponentials
$e^{2sx}$, then $x^2/(C-2x)$ cannot be represented exactly, because
$k(0)=0$ whereas
\[
\int_0^\infty w(s)\,e^{2s\cdot 0}\,ds
=
\int_0^\infty w(s)\,ds
\ge 0
\]
for any $w\ge 0$.
There is, however, an exact decomposition that removes the explicit
$x^2$ factor at the cost of an affine correction term:
\[
\frac{x^2}{C-2x}
=
\frac{C^2}{4}\int_0^\infty e^{-Cs}\,e^{2sx}\,ds
\;-\;
\frac{C}{4}
\;-\;
\frac{x}{2}.
\]
This identity follows from
$x^2 e^{2sx}=\tfrac{1}{4}\,\partial_s^2 e^{2sx}$
and two integrations by parts, whose boundary terms yield the affine
correction.
While this removes the need for polynomial random features, it
introduces signed components (through the subtraction of constant and
linear kernels) and therefore does not retain the
``strictly positive feature map'' and denominator-positivity guarantees
emphasized in the main construction.

\paragraph{Hadamard (elementwise) fusion variant.}
Some implementations replace the tensor product with elementwise
(Hadamard) fusion,
\[
\phi_{\mathrm{had}}(\mathbf{u}; s)
=
\sqrt{w_r}\,
\bigl(
\phi_{\mathrm{poly}}(\mathbf{u})
\odot
\phi_{\mathrm{PRF}}(\mathbf{u}; s)
\bigr),
\]
which yields a valid positive feature map but targets a \emph{different}
kernel than the tensor product.
In particular, the expected inner product becomes
\[
\mathbb{E}\big[
\langle
\phi_{\mathrm{had}}(\widehat{\mathbf{q}}; s),
\phi_{\mathrm{had}}(\widehat{\mathbf{k}}; s)
\rangle
\big]
\;\approx\;
(\widehat{\mathbf{q}}^\top \widehat{\mathbf{k}})^2
\,e^{2s\,\widehat{\mathbf{q}}^\top \widehat{\mathbf{k}}}
\quad
\text{only if }
\phi_{\mathrm{poly}}
\text{ and }
\phi_{\mathrm{PRF}}
\text{ are aligned feature maps.}
\]
With standard independent random features, Hadamard fusion instead
corresponds to a product of marginal kernels across matched feature
indices, which generally introduces bias relative to the target
integrand kernel.
The benefit is computational: it avoids the $D_p\times D_r$ tensor
expansion and reduces memory, but at the cost of a kernel mismatch.
We therefore treat Hadamard fusion as a fast baseline rather than the
primary estimator of the spherical \E-product.

\section{Positivity and Stability Guarantees}
\label{app:stability}

This appendix provides additional justification for the positivity and numerical stability properties of the proposed linearized \E-attention mechanism.

\paragraph{Positivity.}
All components of the \emph{target} spherical kernel are non-negative. Moreover, if the polynomial feature map is computed exactly (or with a positivity-preserving approximation), then the corresponding approximate scores are non-negative:
\begin{itemize}[leftmargin=1.4em]
    \item The polynomial term
    $(\widehat{\mathbf{q}}^\top \widehat{\mathbf{k}})^2$
    is non-negative for all
    $\widehat{\mathbf{q}}, \widehat{\mathbf{k}} \in \Sph^{d-1}$.
    \item The exponential term $e^{2s x}$ is strictly positive for all
    $s \ge 0$ and $x \in [-1,1]$.
    \item The quadrature weights $w_r$ are non-negative.
\end{itemize}
Consequently, under this condition the approximate attention scores produced by the tensor-product random features are non-negative.

\paragraph{Numerical stability.}
The boundedness of the spherical \E-product on $\Sph^{d-1}$
(Proposition~\ref{prop:bounded}) implies that attention scores remain uniformly bounded.
Combined with positivity, this prevents the numerical instabilities associated with oscillatory random features and negative attention weights.
This behavior mirrors stability properties previously observed in positive random
feature–based linear attention mechanisms \citep{performers}.

\section{Experimental and Implementation Details}
\label{app:experiments}

This appendix summarizes additional experimental and implementation details to facilitate reproducibility.

\paragraph{Random feature configuration.}
Unless otherwise stated, all experiments use a fixed number of random features per attention head. Quadrature nodes and weights are chosen using standard numerical integration schemes and shared across heads and layers.

\paragraph{Normalization.}
Queries and keys are explicitly normalized to unit norm prior to feature computation. This normalization is applied per attention head and does not introduce additional learnable parameters.

\paragraph{Hardware and software.}
All experiments were conducted using PyTorch on NVIDIA A100 GPUs with ~80\,GB of memory using DeepSpeed for distributed training. Attention-only benchmarks use custom linear-attention operators, while full-model experiments rely on standard PyTorch modules augmented with the proposed attention mechanism.

\paragraph{Model Architecture}. We use a standard GPT-2 Small architecture (124M parameters) with 12 layers, 12 attention heads, and an embedding dimension of $d_{model}=768$. The MLP block is the standard GPT-2 MLP (LayerNorm + GELU).

\paragraph{Hyperparameters}. All models use the same training hyperparameters:
\begin{itemize}[leftmargin=*]
    \setlength\itemsep{0em}
    \item \textbf{Optimizer}: AdamW, LR $1 \times 10^{-4}$, Weight Decay $0.01$.
    \item \textbf{Batch Size}: 32 per GPU (Ramp-up: 8$\to$32 over 500 steps).
    \item \textbf{Dropout}: $0.1$ (embeddings and attention).
\end{itemize}

\paragraph{Training configuration.}
Optimizer settings, learning rates, batch sizes, and training schedules are kept identical across softmax and \E-based attention variants unless otherwise specified. This ensures that observed differences are attributable to the attention mechanism rather than auxiliary training effects.

\paragraph{Ablation protocol (polynomial approximations).}
The kernel-fidelity ablation in Section~\ref{subsec:poly-ablation} is implemented in \texttt{tests/ablation\_poly\_approx.py}. Running the script with \texttt{--sweep} produces a LaTeX table in \texttt{tables/poly\_ablation\_sweep.tex}. All variants tie the $\mathrm{QKV}$ and output projection weights and compare outputs against exact kernel-normalized spherical \E-attention.

\paragraph{Detailed Synthetic Test Results.}

Below is a summary of the categories of and specific synthetic tests ran. 

\begin{table}[t]
\centering
\small
\caption{\small{
Overview of benchmark task categories used in our evaluation.
Each category groups tasks designed to probe specific capabilities of sequence models.
See \texttt{docs/BENCHMARK\_TASKS.md} in the project code repository for detailed task descriptions.
}}
\label{tab:task-categories}
\resizebox{1\columnwidth}{!}{%
\begin{tabular}{lll}
\toprule
\textbf{Category} & \textbf{Tasks} & \textbf{Tests} \\
\midrule
Basic &
copy, sort, reverse &
Information routing \\

Memory &
retrieval, kv\_recall, first\_token, selective\_copy &
Sparse retrieval, associative memory \\

Long-Range &
long\_copy, distant\_match, multihop &
Long-range dependencies \\

Reasoning &
stack, induction, pattern &
State tracking, pattern matching \\

Arithmetic &
counting, parity, addition, modular &
Aggregation \\

Pattern &
bigram, majority &
Statistical patterns \\

Robustness &
noisy\_copy, compression &
Noise filtering \\

Aggregation &
histogram &
Multi-class counting \\
\bottomrule
\end{tabular}%
}
\end{table}

Below (Table \ref{tab:tasks-all}) provides a fully detailed results of said synthetic tests.

\setlength{\textfloatsep}{8pt}
\begin{table}[t]
\centering
\small
\caption{Synthetic task performance across all categories. Accuracy (mean $\pm$ std over 3 seeds).}
\label{tab:tasks-all}
\begin{tabular}{lccccc}
\toprule
Task & Standard & Spherical--YAT & Performer & Linear & SLAY \\
\midrule
\multicolumn{6}{l}{\emph{Basic}} \\
Copy            & 1.00$\pm$0.00 & 1.00$\pm$0.00 & 1.00$\pm$0.00 & 1.00$\pm$0.00 & 1.00$\pm$0.00 \\
Sort            & 0.28$\pm$0.02 & 0.24$\pm$0.01 & 0.27$\pm$0.02 & 0.26$\pm$0.02 & 0.29$\pm$0.01 \\
Reverse         & 0.51$\pm$0.00 & 0.33$\pm$0.02 & 0.09$\pm$0.01 & 0.05$\pm$0.01 & 0.42$\pm$0.04 \\
\midrule
\multicolumn{6}{l}{\emph{Arithmetic}} \\
Counting        & 0.72$\pm$0.01 & 0.78$\pm$0.04 & 0.81$\pm$0.06 & 0.83$\pm$0.05 & 0.74$\pm$0.13 \\
Parity          & 0.49$\pm$0.03 & 0.49$\pm$0.03 & 0.49$\pm$0.03 & 0.49$\pm$0.03 & 0.49$\pm$0.03 \\
Addition        & 0.78$\pm$0.03 & 0.68$\pm$0.16 & 0.84$\pm$0.02 & 0.91$\pm$0.04 & 0.86$\pm$0.05 \\
Modular         & 0.15$\pm$0.03 & 0.16$\pm$0.01 & 0.15$\pm$0.02 & 0.15$\pm$0.03 & 0.20$\pm$0.03 \\
\midrule
\multicolumn{6}{l}{\emph{Long-Range}} \\
Long Copy       & 1.00$\pm$0.00 & 1.00$\pm$0.00 & 1.00$\pm$0.00 & 1.00$\pm$0.00 & 1.00$\pm$0.00 \\
Distant Match   & 1.00$\pm$0.00 & 1.00$\pm$0.00 & 1.00$\pm$0.00 & 0.99$\pm$0.02 & 1.00$\pm$0.00 \\
Multihop        & 0.04$\pm$0.01 & 0.04$\pm$0.01 & 0.03$\pm$0.02 & 0.03$\pm$0.00 & 0.04$\pm$0.01 \\
\midrule
\multicolumn{6}{l}{\emph{Memory}} \\
Retrieval       & 1.00$\pm$0.00 & 1.00$\pm$0.00 & 1.00$\pm$0.00 & 1.00$\pm$0.00 & 1.00$\pm$0.00 \\
Kv Recall       & 0.02$\pm$0.02 & 0.02$\pm$0.03 & 0.03$\pm$0.00 & 0.02$\pm$0.02 & 0.02$\pm$0.01 \\
First Token     & 1.00$\pm$0.00 & 1.00$\pm$0.00 & 1.00$\pm$0.00 & 0.97$\pm$0.02 & 1.00$\pm$0.00 \\
Selective Copy  & 0.88$\pm$0.00 & 0.88$\pm$0.00 & 0.88$\pm$0.00 & 0.88$\pm$0.00 & 0.88$\pm$0.00 \\
\midrule
\multicolumn{6}{l}{\emph{Patterns}} \\
Bigram          & -- & -- & -- & -- & -- \\
Majority        & 0.78$\pm$0.07 & 0.75$\pm$0.06 & 0.82$\pm$0.02 & 0.82$\pm$0.06 & 0.84$\pm$0.03 \\
Histogram       & 0.87$\pm$0.00 & 0.87$\pm$0.00 & 0.87$\pm$0.00 & 0.87$\pm$0.00 & 0.87$\pm$0.00 \\
\midrule
\multicolumn{6}{l}{\emph{Reasoning}} \\
Stack           & 0.75$\pm$0.01 & 0.75$\pm$0.01 & 0.76$\pm$0.01 & 0.75$\pm$0.01 & 0.76$\pm$0.01 \\
Induction       & 0.02$\pm$0.02 & 0.02$\pm$0.02 & 0.02$\pm$0.01 & 0.01$\pm$0.01 & 0.03$\pm$0.01 \\
Pattern         & 0.91$\pm$0.00 & 0.91$\pm$0.00 & 0.91$\pm$0.00 & 0.91$\pm$0.00 & 0.91$\pm$0.00 \\
\midrule
\multicolumn{6}{l}{\emph{Robustness}} \\
Noisy Copy      & 1.00$\pm$0.00 & 1.00$\pm$0.00 & 1.00$\pm$0.00 & 1.00$\pm$0.00 & 1.00$\pm$0.00 \\
Compression     & 0.59$\pm$0.00 & 0.59$\pm$0.00 & 0.59$\pm$0.01 & 0.59$\pm$0.01 & 0.59$\pm$0.01 \\
\bottomrule
\end{tabular}
\end{table}

\paragraph{Code availability.}
An open-source implementation of the SLAY mechanism, including training scripts and experimental configurations, is available at
\url{https://anonymous.4open.science/r/slay-3B7C}.

\section{Implementation Notes: Quadrature Scaling and Shapes}
\label{app:impl}

\paragraph{Practical knobs and defaults.}
In our implementation, $R$ controls the quadrature accuracy of the Laplace integral and $D$ controls the Monte Carlo variance of PRF. The polynomial approximation uses either a feature dimension $D_p$ (e.g., Random Maclaurin or TensorSketch) or an anchor count $P$ (anchor features or Nystrom); by default we use anchor features because they preserve non-negativity of the polynomial component and are stable at small budgets.
The tensor-product fusion uses a sketch dimension $D_t$, trading accuracy for compute/memory. We use small stabilizers $\epsilon$ (kernel) and $\delta$ (attention denominator) for numerical robustness.

\paragraph{Remark (explicit tensor product).}
Without sketching, the per-node tensor-product feature dimension is $D_pD$ and the resulting attention cost would scale as $O(L\,R\,D_pD)$ rather than $O(L\,R\,D_t)$. We use sketching to avoid explicitly materializing Kronecker vectors while preserving the same target product-kernel structure up to controlled approximation error.

\paragraph{Causal vs. non-causal.}
The linearization applies to both causal and non-causal attention. In experiments we use a causal prefix-sum implementation for autoregressive models; for non-causal settings the same features can be used with the standard linear-attention reordering.

\section{Mathematical Tools Used (High Level)}
\label{app:tools}

Our linearization relies on
(i) the Laplace/Bernstein representation of $1/y$ for completely monotone functions
\citep{widder1941laplace,schilling2012},
(ii) closure properties of positive-definite (PD) kernels under products and nonnegative mixtures
\citep{aronszajn1950,scholkopf2002learning},
(iii) the Gaussian moment generating function to obtain unbiased positive random features for exponential dot-product kernels
\citep{performers},
and (iv) Gauss--Laguerre quadrature to discretize
$\int_0^\infty e^{-t}f(t)\,dt$
\citep{davis1984,gautschi2004}.
Numerical stabilization follows standard practice \citep{higham2002}.

\paragraph{Gauss--Laguerre scaling.}
Let $\{t_r,\alpha_r\}_{r=1}^R$ be the Gauss--Laguerre nodes and weights for
$\int_0^\infty e^{-t}f(t)\,dt$.
With $t=Cs$ and $C=2+\epsilon$, we use
\[
s_r=\frac{t_r}{C},
\qquad
w_r=\frac{\alpha_r}{C},
\]
so that
$\int_0^\infty e^{-Cs}h(s)\,ds
\approx
\sum_{r=1}^R w_r h(s_r)$.

\paragraph{Linear-attention shapes.}
If
$\Psi(\mathbf{Q}),\Psi(\mathbf{K})\in\mathbb{R}^{L\times m}$
and
$\mathbf{V}\in\mathbb{R}^{L\times d_v}$,
then
\[
\mathbf{N}
=
\Psi(\mathbf{Q})
\bigl(\Psi(\mathbf{K})^\top \mathbf{V}\bigr)
\in
\mathbb{R}^{L\times d_v},
\qquad
\mathbf{d}
=
\Psi(\mathbf{Q})
\bigl(\Psi(\mathbf{K})^\top \mathbf{1}\bigr)
\in
\mathbb{R}^{L\times 1},
\]
and we compute
$\widehat{\mathbf{Y}}_i = \mathbf{N}_i/(\mathbf{d}_i+\delta)$
with row-wise broadcasting across $d_v$.

\section{Additional Proofs}
\label{app:additional-proofs}

\begin{adjustwidth}{1.5em}{1.5em}
\begin{proposition}[Boundedness on the Unit Sphere]
\label{prop:bounded}
Let $\widehat{\mathbf{q}}, \widehat{\mathbf{k}} \in \Sph^{d-1}$.
Then the spherical \E-product satisfies
\[
0
\;\le\;
\E_{\mathrm{sph}}(\widehat{\mathbf{q}},\widehat{\mathbf{k}})
\;\le\;
\frac{1}{\epsilon}.
\]
\end{proposition}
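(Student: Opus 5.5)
Write $x=\widehat{\mathbf{q}}^\top\widehat{\mathbf{k}}$. By Cauchy--Schwarz on the unit sphere, $x\in[-1,1]$. By Eq.~\eqref{eq:spherical}, the kernel reduces to the scalar function
\[
f(x)=\frac{x^2}{C-2x},\qquad C=2+\epsilon,
\]
so the plan is to bound $f$ on $[-1,1]$.

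\textbf{Lower bound.} By Lemma~\ref{lem:complete-monotone}, the denominator satisfies $C-2x\ge\epsilon>0$ on the whole interval. The numerator $x^2$ is nonnegative. Hence $f(x)\ge 0$, with equality exactly at $x=0$.

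\textbf{Upper bound.} I would argue that $f$ is nondecreasing on $[0,1]$ and dominated on $[-1,0]$.
\begin{itemize}
\item For $x\in[0,1]$, the numerator $x^2$ is nondecreasing and at most $1$, while the positive denominator $C-2x$ is decreasing with minimum $\epsilon$ at $x=1$. Therefore $f(x)\le 1/\epsilon$, with equality at $x=1$, i.e.\ when $\widehat{\mathbf{q}}=\widehat{\mathbf{k}}$.
\item For $x\in[-1,0]$, we have $x^2\le 1$ and $C-2x\ge C=2+\epsilon>\epsilon$. Therefore $f(x)\le 1/(2+\epsilon)<1/\epsilon$.
\end{itemize}
This case split avoids computing $f'$. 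Alternatively, one can compute
\[
f'(x)=\frac{2x(C-x)}{(C-2x)^2},
\]
which is negative on $[-1,0)$ and positive on $(0,1]$. The maximum of $f$ is then at an endpoint: $f(1)=1/\epsilon$ and $f(-1)=1/(4+\epsilon)$. This confirms the bound.

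There is no real obstacle; the only point needing care is that the denominator stays positive, and Lemma~\ref{lem:complete-monotone} already guarantees this. Finally, I would note that the bound is attained at $\widehat{\mathbf{q}}=\widehat{\mathbf{k}}$, so it is sharp.
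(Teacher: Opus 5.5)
Your proof is correct. The paper's proof is the route you list as the alternative. It computes $f'(x)=2x(C-x)/(C-2x)^2$ and asserts that the maximum on $[-1,1]$ is attained at $x=1$. It leaves two steps implicit, and you supply both: the sign analysis ($C-x\ge 1+\epsilon>0$, so $f'$ has the sign of $x$), and the comparison of the endpoint values $f(1)=1/\epsilon$ and $f(-1)=1/(4+\epsilon)$.

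Your primary argument is a more elementary termwise estimate, and it does not need the case split. The bounds $x^2\le 1$ and $C-2x\ge\epsilon$ hold on all of $[-1,1]$; the second is exactly Lemma~\ref{lem:complete-monotone}. Together they give $f(x)\le 1/\epsilon$ in one line, and equality at $x=1$ ($\widehat{\mathbf{q}}=\widehat{\mathbf{k}}$) shows the bound is sharp.

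What the derivative route buys in addition is the full shape of $f$: decreasing on $[-1,0]$ and increasing on $[0,1]$. That shape is more than the statement needs. For the boundedness claim itself, the termwise estimate is the shorter and cleaner argument.
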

\end{adjustwidth}

\begin{adjustwidth}{1.5em}{1.5em}
\begin{proposition}[Gradient Stability]
\label{prop:lipschitz}
There exists a constant $C_\epsilon$ such that for all
$\widehat{\mathbf{q}}, \widehat{\mathbf{k}} \in \Sph^{d-1}$:
\[
\bigl\|
\nabla_{\widehat{\mathbf{q}}}
\E_{\mathrm{sph}}(\widehat{\mathbf{q}}, \widehat{\mathbf{k}})
\bigr\|
\;\leq\;
C_\epsilon.
\]
\end{proposition}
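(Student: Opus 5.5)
The statement to prove is Proposition~\ref{prop:lipschitz} (Gradient Stability). The plan is to reduce it to a one-dimensional calculus bound via the chain rule.

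\textbf{Reduction.} By Eq.~\eqref{eq:spherical}, $\E_{\mathrm{sph}}(\widehat{\mathbf{q}},\widehat{\mathbf{k}}) = f(x)$ with $f(x)=x^2/(C-2x)$ and $x=\widehat{\mathbf{q}}^\top\widehat{\mathbf{k}}$. I interpret $\nabla_{\widehat{\mathbf{q}}}$ either as the ambient Euclidean gradient in $\R^d$ or as the Riemannian gradient on $\Sph^{d-1}$; I will handle both. Since the Riemannian gradient is the tangential projection $(\mathbf{I}-\widehat{\mathbf{q}}\widehat{\mathbf{q}}^\top)$ of the ambient one, its norm is no larger, so it suffices to bound the ambient gradient. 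The chain rule gives
\[
\nabla_{\widehat{\mathbf{q}}}\E_{\mathrm{sph}} = f'(x)\,\widehat{\mathbf{k}},
\]
and because $\|\widehat{\mathbf{k}}\|_2=1$,
\[
\|\nabla_{\widehat{\mathbf{q}}}\E_{\mathrm{sph}}\| = |f'(x)|.
\]
The problem is therefore to bound $\sup_{x\in[-1,1]}|f'(x)|$.

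\textbf{Derivative bound.} The quotient rule gives
\[
f'(x) = \frac{2x(C-2x)+2x^2}{(C-2x)^2} = \frac{2x(C-x)}{(C-2x)^2}.
\]
By Lemma~\ref{lem:complete-monotone}, the denominator satisfies $C-2x\ge\epsilon>0$ on $[-1,1]$. In the numerator, $|x|\le1$ and $0<C-x\le C+1=3+\epsilon$. Combining these,
\[
|f'(x)| \le \frac{2(3+\epsilon)}{\epsilon^2} =: C_\epsilon,
\]
which is finite for every $\epsilon>0$. An alternative route is to note that $f$ is smooth on a neighborhood of the compact set $[-1,1]$, since its only pole is at $x=C/2>1$, so $f'$ is continuous there and attains a finite maximum. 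The explicit constant is preferable, however, because it shows the $\epsilon^{-2}$ scaling.

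\textbf{Main obstacle.} The only delicate point is the denominator, which must stay uniformly away from zero; it equals $\epsilon$ at $x=1$, i.e.\ $\widehat{\mathbf{q}}=\widehat{\mathbf{k}}$. Lemma~\ref{lem:complete-monotone} is exactly what supplies this, and it is also why $C_\epsilon$ blows up as $\epsilon\to0$. I would add a remark that the bound is essentially tight near $x=1$: there $f'(1)=2(1+\epsilon)/\epsilon^2$, so the $\epsilon^{-2}$ rate cannot be improved.
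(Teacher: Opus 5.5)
Your proof is correct and follows the paper's route: the chain rule with $\|\widehat{\mathbf{k}}\|=1$ reduces the gradient norm to $|f'(x)|$ (a tangential projection can only shrink it), and $f'(x)=2x(C-x)/(C-2x)^2$ is bounded using $(C-2x)^2\ge\epsilon^2$ and a bounded numerator. Your explicit constant $2(3+\epsilon)/\epsilon^2$ and the tightness remark at $x=1$ slightly sharpen the paper's unnamed constant; the paper also mentions the $1/\|\mathbf{q}\|$ Jacobian factor from normalization, which lies outside the statement.
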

\end{adjustwidth}

\begin{adjustwidth}{1.5em}{1.5em}
\begin{theorem}[Positive Definiteness on $\Sph^{d-1}$]
\label{thm:pd}
For all $d \geq 2$ and $\epsilon > 0$, the spherical \E-product
$k(x) = x^2/(C - 2x)$ with $x \in [-1,1]$ and $C = 2 + \epsilon$
is a positive-definite kernel on $\Sph^{d-1}$.
\end{theorem}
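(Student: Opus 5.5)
The plan is to combine the Laplace representation from Lemma~\ref{lem:complete-monotone} with the closure properties of positive-definite kernels (Theorem~\ref{thm:tensor-rkhs} and Lemma~\ref{lem:mixture-pd}). For unit vectors, write
\[
k(\widehat{\mathbf{q}}^\top\widehat{\mathbf{k}}) = \int_0^\infty e^{-sC}\,(\widehat{\mathbf{q}}^\top\widehat{\mathbf{k}})^2\, e^{2s\,\widehat{\mathbf{q}}^\top\widehat{\mathbf{k}}}\,ds .
\]
This is exactly Eq.~\eqref{eq:integral}, and it is valid because $C-2x\ge\epsilon>0$ on $[-1,1]$. It then suffices to show two things: each integrand kernel $k_s(\mathbf{u},\mathbf{v})=(\mathbf{u}^\top\mathbf{v})^2e^{2s\mathbf{u}^\top\mathbf{v}}$ is PD on $\Sph^{d-1}$ for every $s\ge0$, and the mixture with weight $e^{-sC}\ge0$ preserves positive definiteness.

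For the first point, the linear kernel $\mathbf{u}^\top\mathbf{v}$ is PD. By Theorem~\ref{thm:tensor-rkhs} (the Schur product theorem), its square is PD; equivalently, it has the explicit feature map $\mathrm{vec}(\mathbf{u}\mathbf{u}^\top)$. Next, $e^{2s\mathbf{u}^\top\mathbf{v}}=\sum_{n\ge0}\frac{(2s)^n}{n!}(\mathbf{u}^\top\mathbf{v})^n$ is a nonnegative combination of PD kernels, hence PD. It is a pointwise limit of PD Gram matrices, and positive semidefiniteness is preserved under limits. Finally, Theorem~\ref{thm:tensor-rkhs} applied once more shows that the product $k_s$ is PD.

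For the mixture step, given points $\mathbf{u}_1,\dots,\mathbf{u}_n$ and coefficients $c\in\R^n$, I would write
\[
\sum_{i,j}c_ic_j\,k(\mathbf{u}_i^\top\mathbf{u}_j) = \int_0^\infty e^{-sC}\sum_{i,j}c_ic_j\,k_s(\mathbf{u}_i,\mathbf{u}_j)\,ds \;\ge\; 0,
\]
which is Lemma~\ref{lem:mixture-pd}. The interchange of the finite sum and the integral is trivial. Integrability holds because $|k_s|\le e^{2s}$ and $\int_0^\infty e^{-sC}e^{2s}\,ds=1/\epsilon<\infty$, using $C>2$.

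As a cross-check, and as the route connected to Schoenberg's characterization for $d\ge2$ cited in Appendix~\ref{app:geom}, I would expand $k$ as a power series. Using $\frac{1}{C-2x}=\frac1C\sum_{n\ge0}(2x/C)^n$, which converges uniformly on $[-1,1]$ since $2/C<1$, we get $k(x)=\sum_{n\ge0}\frac{2^n}{C^{n+1}}x^{n+2}$. This has nonnegative Maclaurin coefficients, so $k$ is PD on every sphere, and in fact on all $\Sph^{d-1}$ simultaneously. The argument is uniform in $d$, so the hypothesis $d\ge2$ is not genuinely needed.

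The main obstacle is terminological. The argument yields positive \emph{semi}definiteness: Gram matrices are PSD. If strict positive definiteness is intended, one needs more. By the Chen--Menegatto--Sun criterion on $\Sph^{d-1}$ with $d\ge3$, it suffices that infinitely many even and infinitely many odd coefficients are positive, which holds here since every coefficient with $n\ge0$ is positive. The case $d=2$ would require a slightly more careful index condition. I would state the result in the PSD sense and remark on the strict upgrade.
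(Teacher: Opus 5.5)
Your proposal is correct and follows the paper's proof: the Laplace representation $k(x)=\int_0^\infty e^{-sC}x^2e^{2sx}\,ds$, positive definiteness of $x^2$, of $e^{2sx}$ (via its nonnegative power series) and of their product, then closure under the nonnegative mixture with weight $e^{-sC}$. Your additions are rigor the paper leaves implicit rather than a different route: the integrability bound $\int_0^\infty e^{-sC}e^{2s}\,ds=1/\epsilon$, the limit argument for the exponential series, the point that only positive semidefiniteness is obtained, and the direct expansion $k(x)=\sum_{n\ge0}2^nC^{-(n+1)}x^{n+2}$, which on its own settles the claim on every sphere without the Laplace step.
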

\end{adjustwidth}

\paragraph{Proof of Lemma~\ref{lem:complete-monotone}.}
\begin{proof}
Since $x \leq 1$ and $C = 2 + \epsilon$, we have
$C - 2x \geq C - 2 = \epsilon > 0$.
The function $g(y) = 1/y$ is completely monotone on $(0,\infty)$
(all derivatives alternate in sign:
$g^{(n)}(y) = (-1)^n n!/y^{n+1}$),
so Bernstein's theorem applies and yields
$1/(C-2x) = \int_0^\infty e^{-s(C-2x)}\,ds$.
\end{proof}

\paragraph{Proof of Proposition~\ref{prop:prf-unbiased}.}
\begin{proof}
\begin{align*}
\mathbb{E}\!\left[
\left\langle
\phi_{\text{PRF}}(\widehat{\mathbf{q}}; s),
\phi_{\text{PRF}}(\widehat{\mathbf{k}}; s)
\right\rangle
\right]
&=
\frac{1}{D}
\sum_{i=1}^D
\mathbb{E}\!\left[
\exp\!\left(
\sqrt{2s}\,
\boldsymbol{\omega}_i^\top
(\widehat{\mathbf{q}} + \widehat{\mathbf{k}})
- 2s
\right)
\right] \\
&=
e^{-2s}
\cdot
\exp\!\left(
s \|\widehat{\mathbf{q}} + \widehat{\mathbf{k}}\|^2
\right) \\
&=
e^{-2s}
\cdot
e^{s(2 + 2\widehat{\mathbf{q}}^\top \widehat{\mathbf{k}})} \\
&=
e^{2s\,\widehat{\mathbf{q}}^\top \widehat{\mathbf{k}}}.
\end{align*}
The unit-norm constraint
$\|\widehat{\mathbf{q}}\| = \|\widehat{\mathbf{k}}\| = 1$
is essential; otherwise, additional norm terms appear in
$\|\widehat{\mathbf{q}}+\widehat{\mathbf{k}}\|^2$.
\end{proof}

\paragraph{Proof of Proposition~\ref{prop:bounded}.}
\begin{proof}
Let $x = \widehat{\mathbf{q}}^\top \widehat{\mathbf{k}} \in [-1,1]$
and define $f(x)=x^2/(C-2x)$.
Since $C-2x \ge \epsilon > 0$ on $[-1,1]$, we have $f(x)\ge 0$.
Moreover,
\[
f'(x)=\frac{2x(C-x)}{(C-2x)^2}.
\]
On $[-1,1]$, the maximum of $f$ is attained at $x=1$, giving
$f(1)=1/\epsilon$.
\end{proof}

\paragraph{Proof of Proposition~\ref{prop:lipschitz}.}
\begin{proof}
Write
$\E_{\mathrm{sph}}(\widehat{\mathbf{q}},\widehat{\mathbf{k}})=f(x)$
with $x=\widehat{\mathbf{q}}^\top\widehat{\mathbf{k}}$
and $f(x)=x^2/(C-2x)$.
Differentiating gives
\[
f'(x)=\frac{2x(C-x)}{(C-2x)^2}.
\]
On $x\in[-1,1]$ with $C=2+\epsilon$, the denominator satisfies
$(C-2x)^2\ge \epsilon^2$ and the numerator is bounded, hence
$|f'(x)|\le C_\epsilon'$ for some constant depending only on $\epsilon$.
By the chain rule,
$\nabla_{\widehat{\mathbf{q}}}
\E_{\mathrm{sph}}(\widehat{\mathbf{q}},\widehat{\mathbf{k}})
=
f'(x)\nabla_{\widehat{\mathbf{q}}}x$
and
$\|\nabla_{\widehat{\mathbf{q}}}x\|
=
\|\widehat{\mathbf{k}}\|=1$
(or its tangent-space projection has norm $\le 1$),
giving the stated uniform bound.

If $\widehat{\mathbf{q}}$ is obtained by normalizing a pre-activation
vector $\mathbf{q}$ via
$\widehat{\mathbf{q}} = \mathbf{q}/\|\mathbf{q}\|$, then
\[
J(\mathbf{q})
=
\frac{1}{\|\mathbf{q}\|}
\bigl(
\mathbf{I} - \widehat{\mathbf{q}}\widehat{\mathbf{q}}^\top
\bigr),
\]
so
$\|J(\mathbf{q})\|_{\mathrm{op}} \le 1/\|\mathbf{q}\|$
wherever normalization is defined.
Thus the gradient w.r.t.\ $\mathbf{q}$ is controlled by the spherical
gradient and scales at most like $1/\|\mathbf{q}\|$.
\end{proof}

\paragraph{Proof of Theorem~\ref{thm:pd}.}
\begin{proof}
By Lemma~\ref{lem:complete-monotone},
\[
k(x)
=
\frac{x^2}{C-2x}
=
\int_0^\infty e^{-sC}\, x^2 e^{2sx} \, ds.
\]
For each $s \geq 0$, the integrand
$k_s(x)=x^2 e^{2sx}$ is PD because:
(i)
$x^2=(\widehat{\mathbf{q}}^\top\widehat{\mathbf{k}})^2$
is PD as a degree-2 polynomial kernel restriction;
(ii) for $s\ge 0$,
\[
e^{2s(\widehat{\mathbf{q}}^\top \widehat{\mathbf{k}})}
=
\sum_{n=0}^\infty
\frac{(2s)^n}{n!}
(\widehat{\mathbf{q}}^\top \widehat{\mathbf{k}})^n
\]
is a nonnegative linear combination of PD polynomial kernels, hence PD;
and (iii) products of PD kernels are PD.
Since the weight $e^{-sC}\ge 0$, the nonnegative mixture over $s$ is PD
by Lemma~\ref{lem:mixture-pd}.
\end{proof}

\section{Empirical Analysis of SLAY Properties}
\label{app:empirical-analysis}

This appendix provides empirical grounding for the theoretical claims in the main text. We present a unified analysis organized around three questions: (1)~Why does the spherical $\E$-kernel behave differently from softmax? (2)~Why is SLAY numerically stable when other polynomial approximations fail? (3)~How well does the quadrature approximation work in practice?

\subsection{Understanding the Spherical $\E$-Kernel}
\label{app:kernel-properties}

The spherical $\E$-kernel differs fundamentally from softmax in how it weighs token interactions. Figure~\ref{fig:kernel-comparison} compares the response curves: while softmax grows unboundedly as alignment approaches $x=1$, the spherical $\E$-kernel remains bounded by $1/\epsilon$. This self-regularization is a direct consequence of the inverse-distance denominator---highly aligned tokens are also close in representation space, and the two effects partially cancel.

\begin{figure}[htbp]
\centering
\includegraphics[width=.8\textwidth]{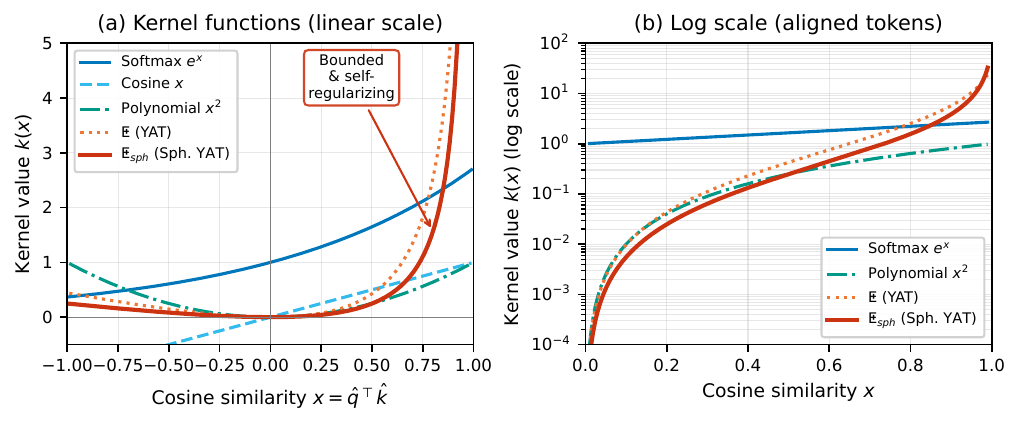}
\caption{Kernel response as a function of alignment $x=\hat{q}^\top\hat{k}$. The spherical $\E$-kernel is bounded and self-regularizing, unlike softmax which grows without bound.}
\label{fig:kernel-comparison}
\end{figure}

\begin{figure}[ht]
\centering

\includegraphics[width=.4\linewidth]{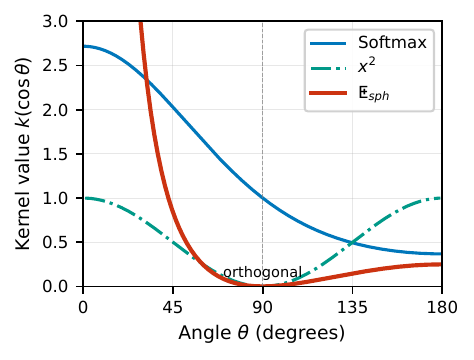}
\caption{Kernel response vs.\ angular distance.}
\label{fig:kernel-angle}

\end{figure}
This difference becomes even more pronounced when we examine angular discrimination. Figure~\ref{fig:kernel-angle} shows kernel response as a function of the angle between query and key. Spherical YAT exhibits sharp selectivity: it peaks dramatically for nearly-aligned tokens and drops to near-zero for orthogonal vectors. In contrast, softmax maintains appreciable weight even at 90°, effectively blurring the distinction between relevant and irrelevant tokens.

\begin{figure}[ht]
\centering
\includegraphics[width=.4\linewidth]{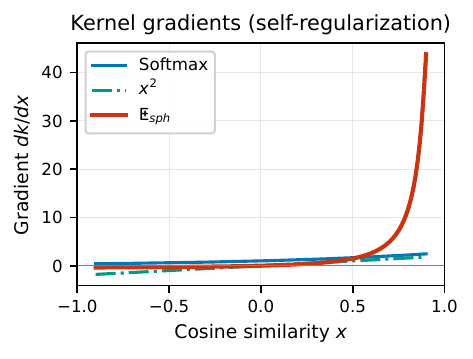}
\caption{Gradient magnitudes.}
\label{fig:kernel-derivatives}
\end{figure}
A natural concern is whether this sharper response leads to gradient instability. Figure~\ref{fig:kernel-derivatives} addresses this by comparing gradient magnitudes. While spherical YAT does exhibit larger gradients near perfect alignment ($x \approx 1$), these remain bounded and concentrated in a small region. For the vast majority of token pairs, gradients are well-behaved, enabling stable training.

\subsection{Numerical Stability: The Denominator Problem}
\label{app:stability-experiments}

A key claim of SLAY is that its random feature construction guarantees positive denominators, avoiding the catastrophic failures that plague signed polynomial approximations. This subsection provides direct empirical evidence for this claim.

Figure~\ref{fig:denominator-histogram} shows the distribution of denominator values (the sum $\sum_j \langle\phi(q_i), \phi(k_j)\rangle$ that normalizes attention) across different methods. The contrast is stark: SLAY and YAT variants produce strictly positive denominators in all samples, while TensorSketch and Random Maclaurin generate substantial fractions of negative values. When the denominator crosses zero, attention weights become undefined or flip sign, causing NaN gradients and training collapse.

\begin{figure}[htbp]
\centering
\includegraphics[width=.8\textwidth]{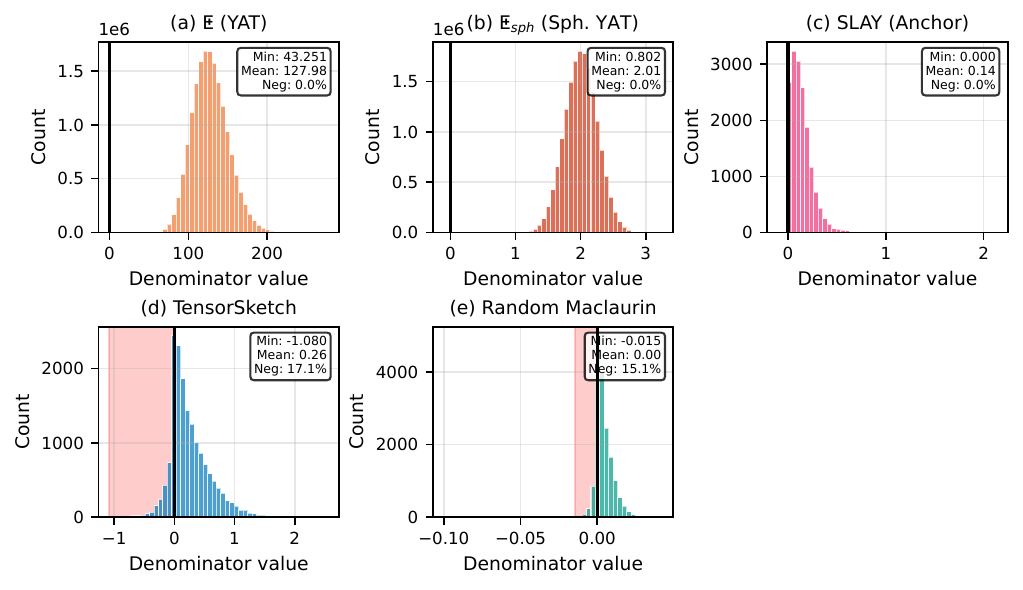}
\caption{Denominator value distributions. SLAY and YAT variants maintain strict positivity; signed polynomial methods (TensorSketch, Random Maclaurin) produce negative denominators that cause numerical instability.}
\label{fig:denominator-histogram}
\end{figure}

\begin{figure}[ht]
\centering

\includegraphics[width=.4\linewidth]{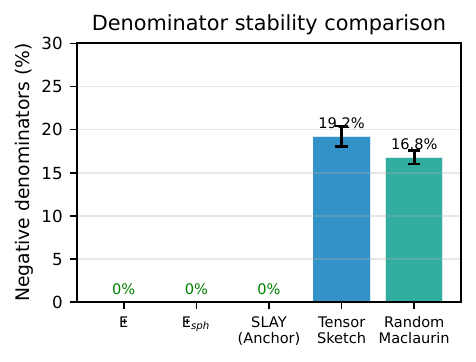}
\caption{ Stability across seeds.}
\label{fig:denominator-stability}

\end{figure}
One might wonder whether SLAY's stability is merely a lucky outcome for particular random seeds. Figure~\ref{fig:denominator-stability} dispels this concern by showing stability across multiple random initializations. SLAY's positivity guarantee is deterministic---it follows from the construction, not from chance---and the experiments confirm consistent behavior regardless of seed.

\subsection{Quadrature Approximation Quality}
\label{app:quadrature-details}

The integral representation of the spherical $\E$-kernel (Eq.~\ref{eq:integral}) is discretized via Gauss-Laguerre quadrature. A central practical question is: how many nodes $R$ are needed for acceptable accuracy?

Figure~\ref{fig:quadrature-convergence} shows that error decreases exponentially with $R$---the hallmark of Gaussian quadrature for smooth integrands. This rapid convergence justifies our default choice of $R=3$, which achieves low error while contributing negligible overhead to total compute.

\begin{figure}[htbp]
\centering
\includegraphics[width=.8\textwidth]{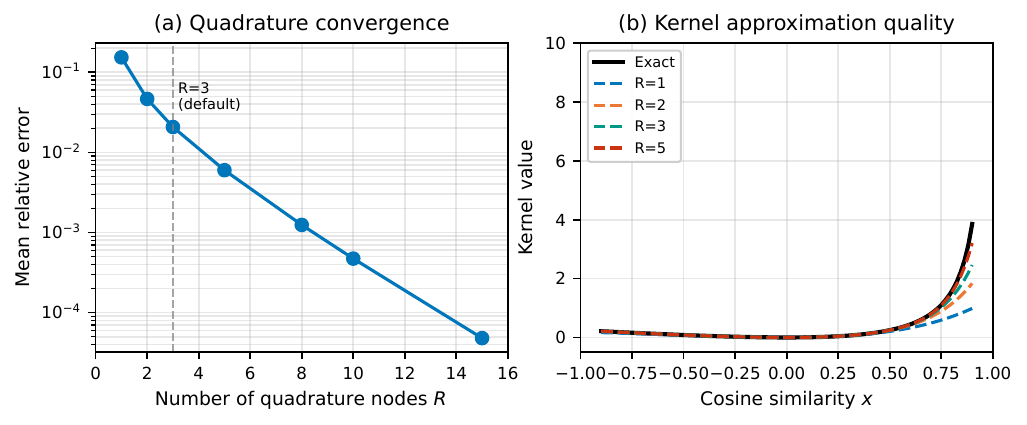}
\caption{Quadrature error vs.\ number of nodes $R$. Exponential convergence enables small $R$ in practice.}
\label{fig:quadrature-convergence}
\end{figure}

\begin{figure}[ht]
\centering

\includegraphics[width=0.4\linewidth]{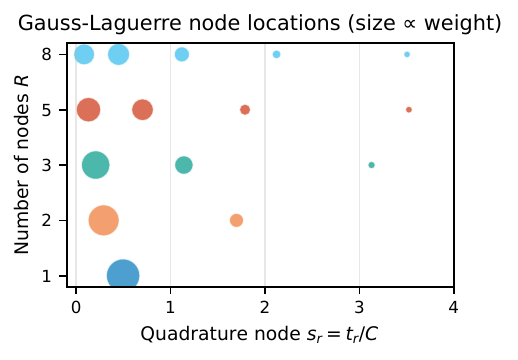}
\caption{Gauss-Laguerre nodes.}
\label{fig:quadrature-nodes}

\end{figure}
Why does small $R$ suffice? The Gauss-Laguerre nodes are not uniformly spaced: lower-indexed nodes sit closer to zero and carry substantially larger weights (Figure~\ref{fig:quadrature-nodes}).

\begin{figure}[ht]
\centering

\includegraphics[width=.4\linewidth]{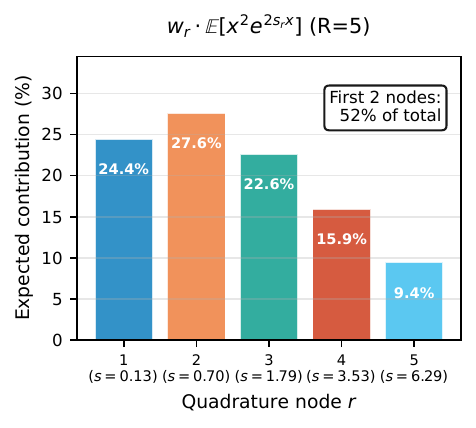}
\caption{Expected contribution.}
\label{fig:expected-contribution}

\end{figure}
As a result, the first few nodes contribute the majority of the integral approximation (Figure~\ref{fig:expected-contribution}).

This concentration persists across different values of the alignment variable $x$ (Figure~\ref{fig:quadrature-contributions}).

\begin{figure}[htbp]
\centering
\includegraphics[width=.8\textwidth]{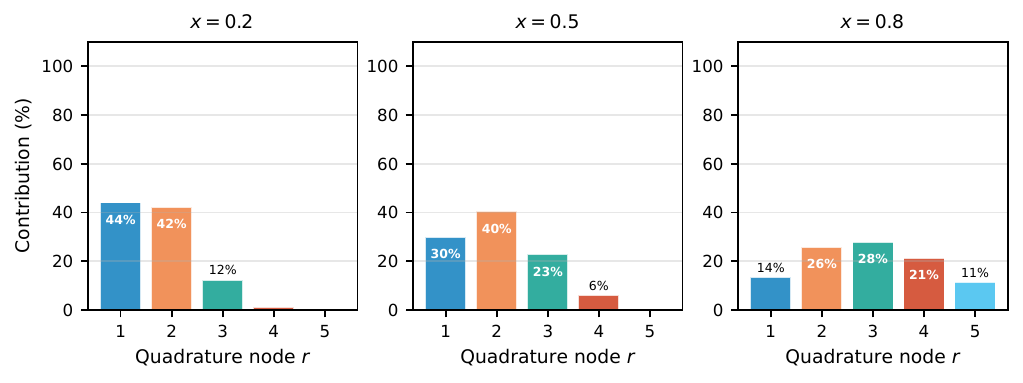}
\caption{Node contributions at different $x$ values. First nodes dominate across all alignments.}
\label{fig:quadrature-contributions}
\end{figure}

Figure~\ref{fig:approximation-quality} shows the end-to-end kernel reconstruction: SLAY closely matches the quadrature-only baseline, confirming that the random feature component adds minimal additional error. Finally, Figure~\ref{fig:error-features} demonstrates that SLAY's approximation error is stable across feature budgets, with the quadrature discretization---not the random features---being the dominant source of approximation error.

\begin{figure}[htbp]
\centering
\includegraphics[width=.8\textwidth]{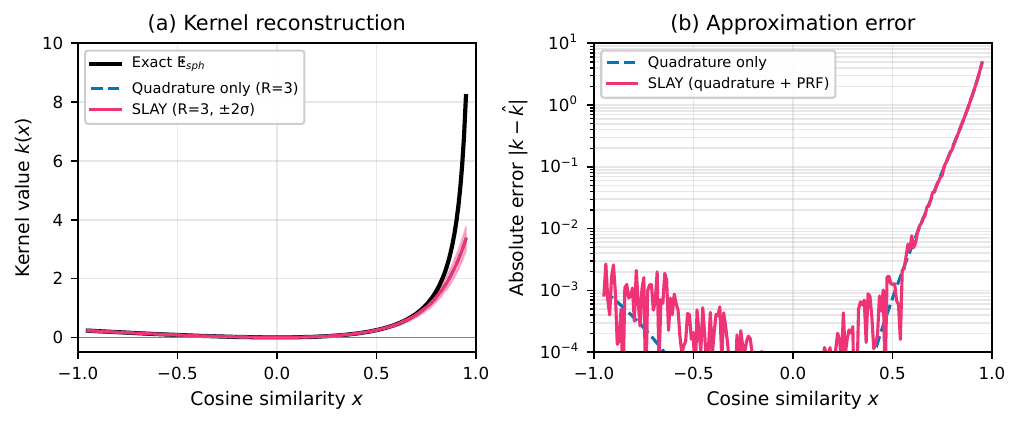}
\caption{Kernel reconstruction quality. SLAY closely matches pure quadrature.}
\label{fig:approximation-quality}
\end{figure}

\begin{figure}[ht]
\centering

\includegraphics[width=.4\linewidth]{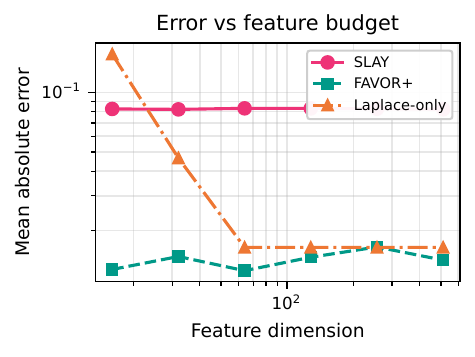}
\caption{Error vs.\ features.}
\label{fig:error-features}

\end{figure}
Finally, Figure~\ref{fig:error-features} demonstrates that SLAY's approximation error is stable across feature budgets, with the quadrature discretization---not the random features---being the dominant source of approximation error.

\subsection{Attention Selectivity and Patterns}
\label{app:attention-analysis}

The kernel-level differences documented above translate into measurable differences in attention behavior. This subsection examines how attention weight is distributed across tokens.

\begin{figure}[ht]
\centering
\includegraphics[width=.4\linewidth]{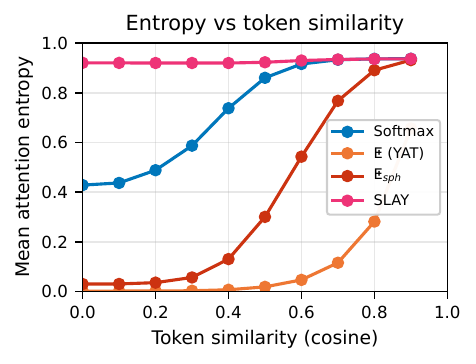}
\caption{Entropy vs.\ similarity.}
\label{fig:entropy-similarity}
\end{figure}
Figure~\ref{fig:entropy-similarity} plots attention entropy---a measure of how diffuse or concentrated the attention distribution is---as a function of token similarity. At low similarity (random embeddings), YAT mechanisms exhibit dramatically lower entropy than softmax, indicating highly selective attention that focuses on the few genuinely relevant tokens rather than spreading weight broadly. As similarity increases, all methods converge, as expected when most tokens become relevant.

Figure~\ref{fig:attention-entropy} provides another view of the same phenomenon, showing the distribution of entropy values. Finally, Figure~\ref{fig:attention-patterns} visualizes representative attention matrices: YAT and SLAY produce visibly more concentrated patterns compared to the broader, more uniform patterns of softmax.

\begin{figure}[htbp]
\centering
\includegraphics[width=.8\textwidth]{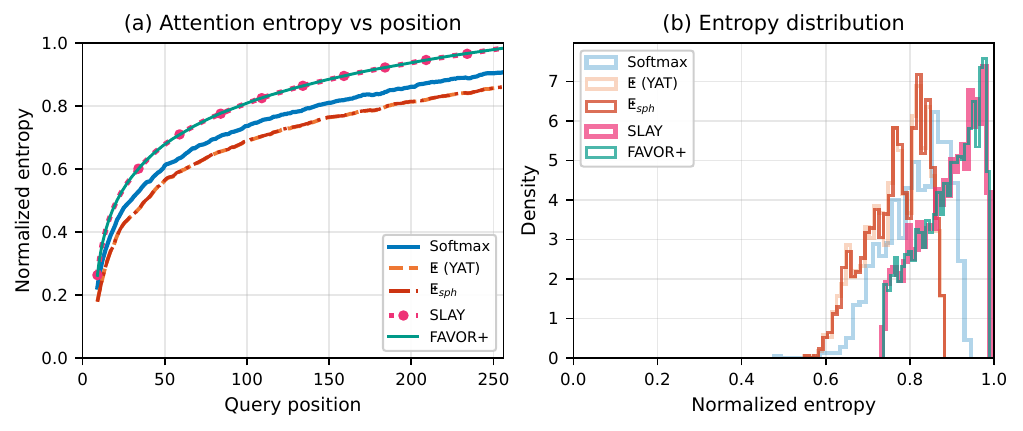}
\caption{Distribution of attention entropy values across mechanisms.}
\label{fig:attention-entropy}
\end{figure}

\begin{figure}[htbp]
\centering
\includegraphics[width=.8\textwidth]{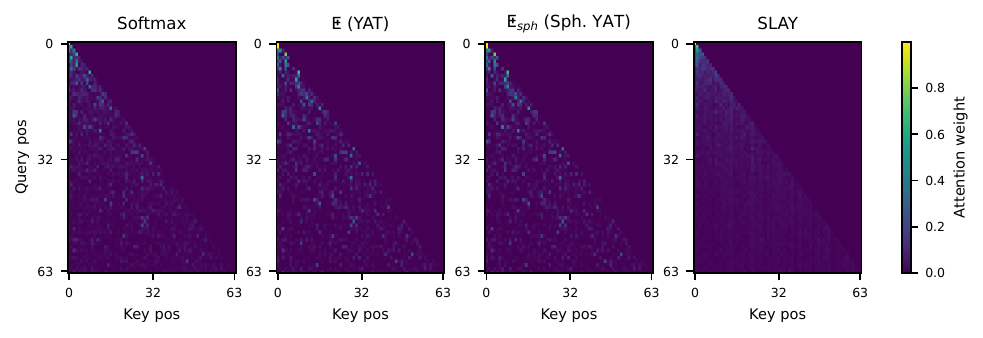}
\caption{Attention pattern visualization. YAT and SLAY show more concentrated patterns; softmax is more diffuse.}
\label{fig:attention-patterns}
\end{figure}

A natural question is whether SLAY's approximation preserves the attention behavior of exact spherical YAT. Figure~\ref{fig:attention-comparison} compares attention outputs directly, showing high correlation between the exact and approximate methods.

\begin{figure}[htbp]
\centering
\includegraphics[width=.8\textwidth]{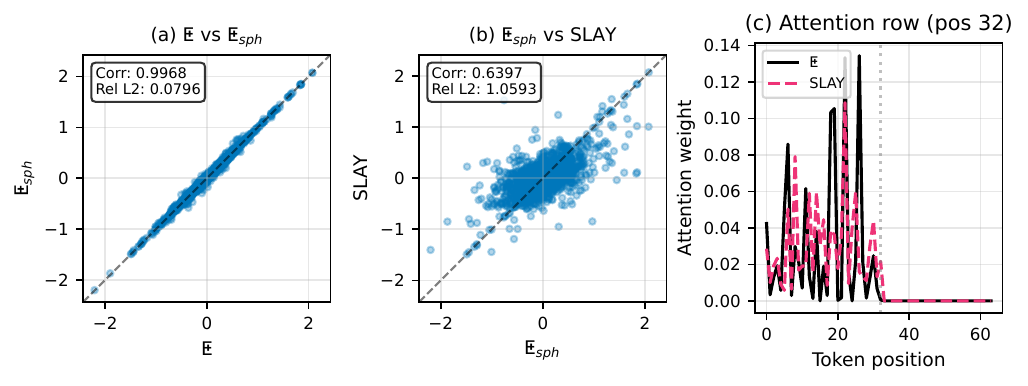}
\caption{Attention output comparison between exact and SLAY-approximated methods.}
\label{fig:attention-comparison}
\end{figure}

\subsection{Spherical Geometry Visualization}
\label{app:spherical-viz}

The spherical constraint enables direct visualization of attention behavior on the unit sphere $\Sph^{d-1}$. With a query fixed at the north pole, we can examine how different kernels weight keys distributed across the sphere.

Figure~\ref{fig:spherical-heatmap} shows 3D heatmaps of attention weight. YAT and SLAY concentrate attention strongly around the query direction; softmax distributes weight more uniformly, assigning appreciable weight even to keys far from the query. Figure~\ref{fig:spherical-polar} presents the same data as 2D polar profiles, making the difference in sharpness immediately apparent.

\begin{figure}[htbp]
\centering
\includegraphics[width=.8\textwidth]{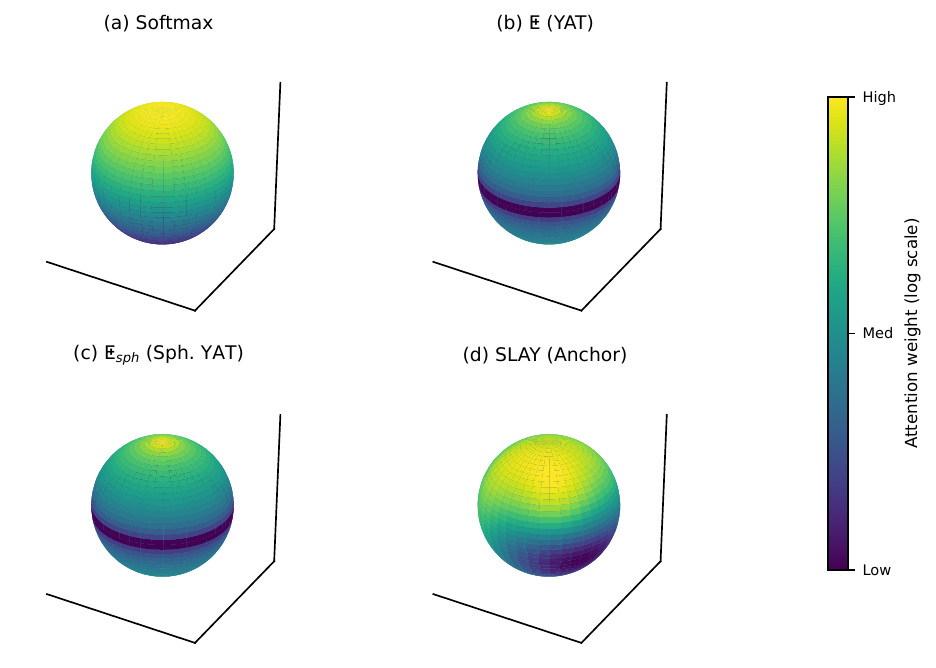}
\caption{3D spherical attention heatmap. YAT and SLAY concentrate weight around the query; softmax is more diffuse.}
\label{fig:spherical-heatmap}
\end{figure}

\begin{figure}[htbp]
\centering
\includegraphics[width=.8\textwidth]{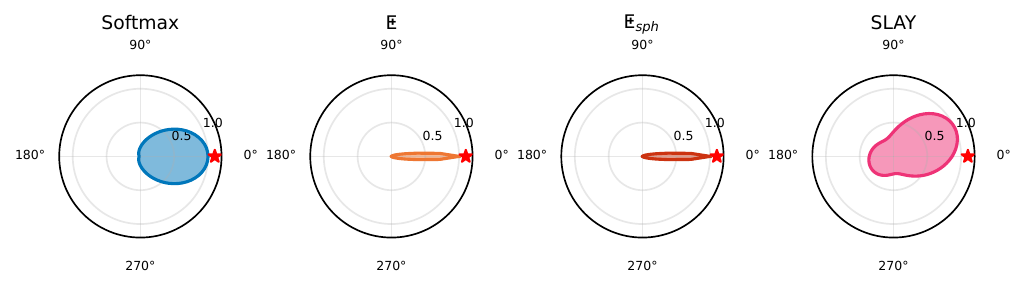}
\caption{Polar attention profile. Sharper drop-off indicates more geometry-aware attention.}
\label{fig:spherical-polar}
\end{figure}

\subsection{Computational Scaling}
\label{app:scaling-analysis}

Finally, we verify that SLAY achieves the promised linear scaling. Figure~\ref{fig:scaling-exact} compares latency, memory, and throughput as a function of sequence length. Quadratic methods (standard attention, exact YAT) scale poorly and run out of memory beyond 16K tokens. SLAY, along with other linear attention methods, scales to 131K tokens and beyond while maintaining high throughput.

\begin{figure}[htbp]
\centering
\includegraphics[width=.8\textwidth]{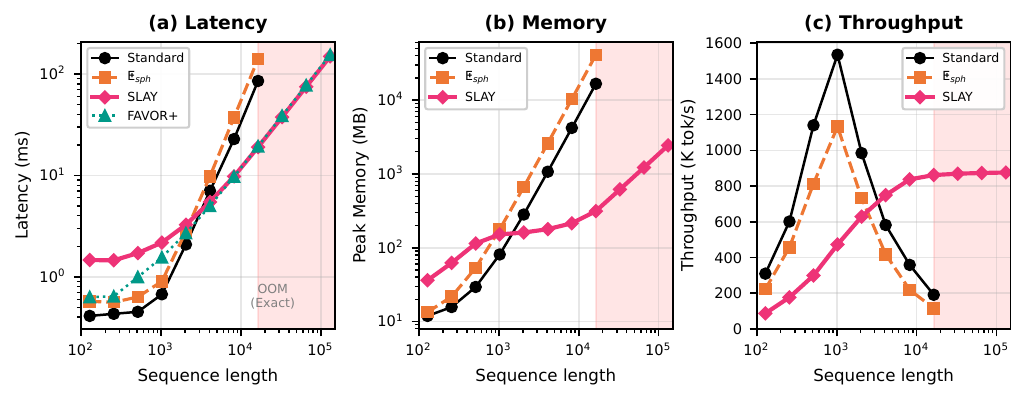}
\caption{Computational scaling comparison. SLAY maintains linear complexity up to 131K tokens where quadratic methods fail.}
\label{fig:scaling-exact}
\end{figure}

\paragraph{Summary.}
Together, these experiments validate the core claims: the spherical $\E$-kernel provides sharper, more selective attention than softmax; SLAY's construction guarantees positive denominators; the quadrature approximation converges rapidly; and the resulting mechanism scales linearly in sequence length.

\textbf{Attention Mechanisms}. We compare the following mechanisms (Table \ref{tab:attention_params}). Note that $\epsilon$ values ensure numerical stability and vary by mechanism.

\begin{table}[H]
\centering
\caption{Attention mechanisms and their configurations.}
\label{tab:attention_params}
\small
\begin{tabular}{llcl}
\toprule
\textbf{Method} & \textbf{Type} & $\bm{\epsilon}$ & \textbf{Parameters / Notes} \\
\midrule
Standard & Softmax & - & Exact, quadratic cost. \\
Linear & ELU+1 & $10^{-6}$ & Feature map $\phi(x) = \text{elu}(x)+1$. \\
Performer & FAVOR+ & - & $M=64$ random features (ReLU). \\
Yat & Exact & $10^{-3}$ & Exact Yat-kernel. \\
Yat Spherical & Exact & $10^{-3}$ & Exact Spherical Yat-kernel. \\
\textbf{SLAY} & Linear & $10^{-3}$ & $M_{\text{RFF}}=64, M_{\text{PRF}}=16, M_{\text{Poly}}=8$. \\
\bottomrule
\end{tabular}
\end{table}

\end{document}